\newtheorem{theorem}{Theorem}
\newtheorem{prop}[theorem]{Proposition}
\newtheorem{cor}[theorem]{Corollary}
\newtheorem{lemma}[theorem]{Lemma}
\newtheorem{remark}[theorem]{Remark}
\newtheorem{assumption}[theorem]{Assumption}
\newcommand{\Rmnum}[1]{\expandafter\@slowromancap\romannumeral #1@}
\DeclareMathOperator*{\arginf}{arg\,inf}
\DeclareMathOperator*{\argmin}{arg\,min}
\def\A{\mathcal{A}}
\def\X{\mathcal{X}}
\def\Y{\mathcal{Y}}
\def\G{\mathcal{G}}
\def\Dd{\mathcal{D}}
\def\L{\mathcal{L}}
\def\J{\mathcal{J}}
\def\S{\mathcal{S}}
\def\N{\mathbb{N}}
\def\R{\mathbb{R}}
\newcommand{\E}{\mathop{\mathbb{E}}\limits}
\title{Game-Theoretic Unlearnable Example Generator}
\author{
    Shuang Liu\textsuperscript{\rm 1, 2},
    Yihan Wang\textsuperscript{\rm 1, 2},
    Xiao-Shan Gao\textsuperscript{\rm 1, 2}\thanks{Corresponding author.}
}
\begin{document}

\maketitle

\begin{abstract}
Unlearnable example attacks are data poisoning attacks aiming to degrade the clean test accuracy of deep learning by adding imperceptible perturbations to the training samples, which can be formulated as a bi-level optimization problem. However, directly solving this optimization problem is intractable for deep neural networks.
In this paper, we investigate unlearnable example attacks from a game-theoretic perspective, by formulating the attack as a nonzero sum Stackelberg game. First, the existence of game equilibria is proved under the normal setting and the adversarial training setting. 
It is shown that the game equilibrium gives the most powerful poison attack in that the victim has the lowest test accuracy among all networks within the same hypothesis space,  when certain loss functions are used.
Second, we propose a novel attack method, called the Game Unlearnable Example (GUE), which has three main gradients. 
(1) The poisons are obtained by directly solving the equilibrium of the Stackelberg game with a first-order algorithm. 
(2) We employ an autoencoder-like generative network model as the poison attacker.
(3) A novel payoff function is introduced to evaluate the performance of the poison. Comprehensive experiments demonstrate that GUE can effectively poison the model in various scenarios. 
Furthermore, the GUE still works by using a relatively small percentage of the training data to train the generator, and the poison generator can generalize to unseen data well. Our implementation code can be found at https://github.com/hong-xian/gue.
\end{abstract}

\section{Introduction}
Deep learning has achieved remarkable success in various fields, including computer vision \cite{resnet}, natural language processing, and large language models \cite{brown2020gpt}, where the acquisition of a substantial amount of training data is typically necessary. 
However, in practical scenarios, there exists a potential issue of collecting unauthorized private data from the Internet to train these models. This raises significant privacy concerns and underscores the need to address how to protect personal data from exploitation.

To prevent unauthorized use of private data, several poisoning methods \cite{zhou2019confuse, huang2021unlearnable, adv_poison, yuan2021ntga, tao2021better, yu2022shortcut, shortcutgen, sandoval2022autoregressive} have been proposed. These methods involve adding imperceptible perturbations to the training samples, thereby poisoning the models and leading to poor performance on clean test data, thus making the training data unlearnable. However, it is generally agreed that the above methods are vulnerable to adversarial training. In response, novel approaches such as \cite{fu2022rem, 2023inf} have been proposed to generate robust unlearnable examples that can withstand adversarial training. Commonly referred to as "unlearnable examples attack" or "availability attack", these poison attacks have attracted significant attention. 

In previous works \cite{zhou2019confuse, tao2021better, yu2022shortcut}, unlearnable example attacks were usually formulated as a bi-level optimization problem. But directly solving the optimization problem is intractable, and \citet{zhou2019confuse} used an alternative update and \citet{yuan2021ntga} used Neural Tangent Kernels \cite{jacot2018NTK} to approximately optimize the bi-level objective. 
\citet{yu2022shortcut} designed poison perturbations that are easily learned by the model as ``shortcuts'' to prevent learning the information from the real data.
%

In this paper, we formulate the unlearnable example attack as a non-zero sum Stackelberg game, in which the attacker, as the leader, crafts poison perturbation on the training data with the aim of decreasing the test accuracy, while the classifier, as the follower, optimizes the network parameters on the poisoned training dataset. 
We prove the existence of Stackelberg equilibria in both the normal and adversarial training settings.
Furthermore, we propose a novel and effective approach, namely Game Unlearnable Example (GUE), in which we directly compute the Stackelberg game equilibrium using a first-order algorithm \cite{bome}
and select an appropriate payoff function for the poison attacker.
It is noteworthy that our GUE is not only applicable to standard natural training but can also extend to adversarial training. 

Existing unlearnable example attacks typically require modifying the entire training dataset.
%
However, in practical scenarios, the continuous influx of new clean data over time can render the unlearnable effect almost negligible. 
To mitigate this problem, when there is a continuous stream of new data, the user must go through the generation process for the whole dataset repeatedly, which is time-consuming. 
In order to overcome this problem, we use an autoencoder-like generator as the attacker, and show that a well-trained attacker can easily generalize to unseen data well. 
The generator can be used to perform a simple forward propagation on the incoming data to generate the corresponding poison perturbation. Figure \ref{fig-pipeline} illustrates the pipeline of the GUE attack.


\begin{figure}[t]
\centering
\includegraphics[width=0.9\columnwidth]{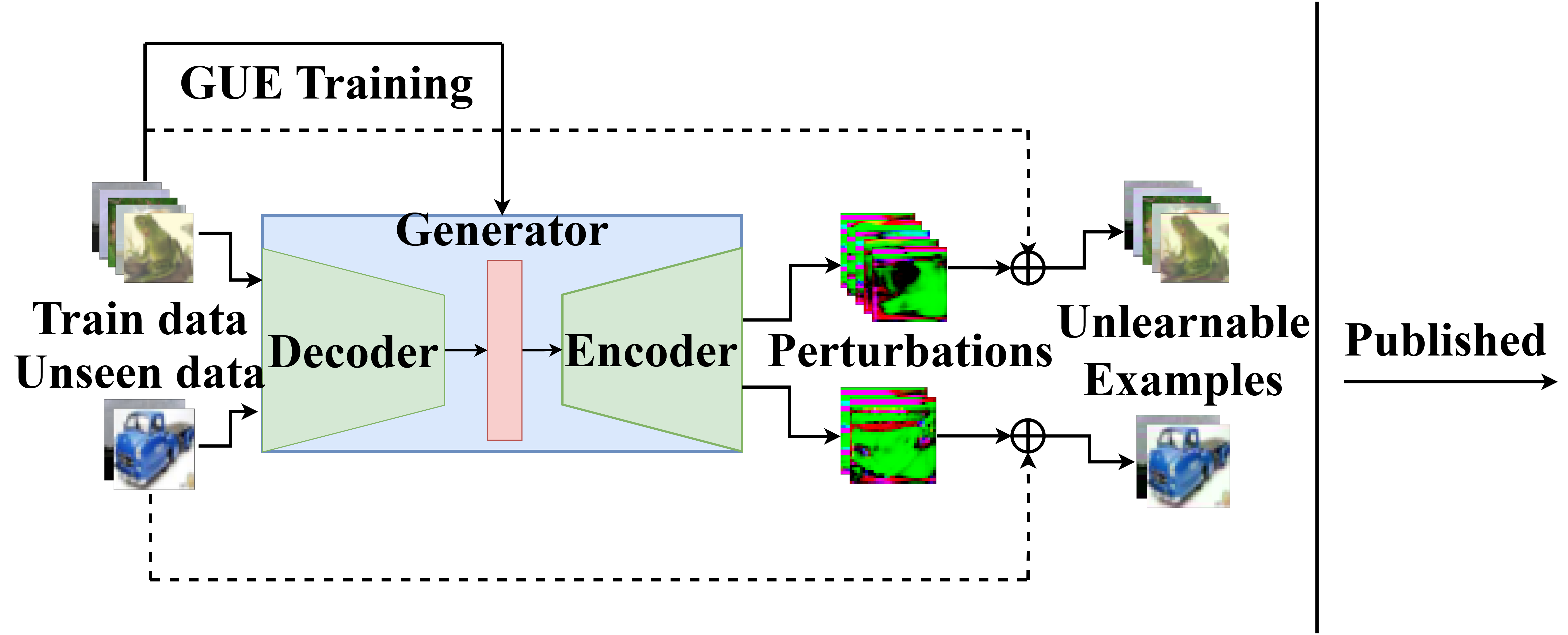} 
\caption{An illustration of GUE attack, where the trained generator can generalize to unseen data well.}
\label{fig-pipeline}
\end{figure}

In summary, our work has three main contributions:
\begin{itemize}
\item The unlearnable example attacks are formulated as a Stackelberg game and the existence of game equilibria under several useful settings are proved.
It is further shown that the game equilibrium gives the most powerful poison attack in that the victim has the lowest test accuracy among all networks within the same hypothesis space when certain loss functions are used.

\item We propose a new unlearnable example attack, called GUE, which, to the best of our knowledge, is the first poison method directly based on computing the game equilibria.
Furthermore, by using an autoencoder-like generator as the attacker, poison perturbations for new data can be generated with simple forward propagation.
\item Extensive experiments are used to demonstrate the effectiveness and generalizability of GUE in different scenarios. In particular, it is shown that GUE still works by using a relatively small percentage of the training data to train the generator.
    %
\end{itemize}

\section{Related Works}
Our GUE attack is intimately related to unlearnable example attacks, the Stackelberg game, and bi-level optimization problems. We first provide a brief overview of the developmental trajectory about unlearnable example attacks and explain their relationship with our approach.

Unlearnable example attacks are data poisoning methods with addictive bounded perturbations that prevent unauthorized model training. 
Error-minimizing noise (EM) \cite{huang2021unlearnable} generates an imperceptible perturbation by solving a Min-Min optimization.
Hypocritical perturbations (HYP) \cite{tao2021better} follows a similar idea but directly uses a pre-trained model rather than the above Min-Min optimization. 
Error maximizing noise (TAP) \cite{adv_poison} generates adversarial examples as poisoned data. 
NTGA \cite{yuan2021ntga} generates poison perturbations in an ensemble of neural networks modeled with neural tangent kernels. 
There exist other methods that do not involve optimization. LSP \cite{yu2022shortcut} synthesizes linearly separable perturbations as attacks, and AR \cite{sandoval2022autoregressive} introduces a generic perturbation that can be utilized in various datasets and architectures. 

However, the above methods cannot poison the adversarial trained model, and several novel methods were proposed to mitigate this problem.
REM \cite{fu2022rem} generates a stronger perturbation by solving a Min-Min-Max three-level optimization problem.
INF \cite{2023inf} generates a poison perturbation based on the induction of indiscriminate features between different classes to poison an adversarially trained model under a larger adversarial budget.

\citet{lu2022TGDA} formulated the traditional data poison attacks as a Stackelberg game and used total gradient descent to solve the game, while they considered adding a small portion of the poisoned data to the training set rather than modifying the training samples. And they did not consider the existence of game equilibrium.

The works most relevant to ours are DeepConfuse \cite{zhou2019confuse} and ShortcutGen\cite{shortcutgen}.
\citet{zhou2019confuse} formulated the unlearnable examples attack as a bi-level optimization problem, and optimized the bi-level problem by decoupling the alternating update procedure. However, this method cannot guarantee the convergence theoretically and requires multiple rounds of optimization, which is very time-consuming. 
An autoencoder-like generator was also used as the attacker in DeepConfuse, but the generalizability of the generator was not discussed.
\citet{shortcutgen} used a static randomly initialized discriminator to train a poison generator, in order to encourage the generator to learn spurious shortcuts in the Min-Min framework \cite{huang2021unlearnable}. 
It is based on their conjecture that a randomly initialized discriminator provides a noisy mapping between images and labels and thus lacks theoretical guarantee.

\section{Unlearnable Example Game}
\label{sec:game}
We formulate the unlearnable example attack
as a Stackelberg game, called \emph{unlearnable example game}, denoted as $\mathcal{G}$. 
Then, we prove the existence of its equilibrium in the general case, which is extended to various attacking scenarios.
\subsection{Unified Game Framework}
\paragraph{Settings.}
The game $\mathcal{G}$ has two participants: a poison attacker and a victim classifier. 
Let $\S$ be a finitely sampled clean training dataset from a data distribution $\Dd$ over $\X\times \Y$,
where $\X\subset\R^n$ is the dataset 
and $\Y=[K]=\{i\}_{i=1}^K$ is the label set for $K\in \N_+$.
The attacker modifies $x$ in each sample-label pair $(x,y) \in \S$ by an imperceptible perturbation $\A(x, y)$ such that
\begin{align*}
    ||\A(x, y)||_\infty \leq \epsilon.
\end{align*}
In our method proposed in Algorithm \ref{alg:algorithm},
$\A$ will be specified as an encoder-decoder generator. 
Here, it can be simply regarded as a map $A(x,y):\S\to \R^n$, where $n$ is the dimension of the data.
The victim classifier only has access to the poisoned training set on which a classifier $f$ with parameters $\theta$ will be trained.
The attacker's goal is to destroy the performance of $f_\theta$.
We denote the loss function used in classifier training by $\L_c$ and the loss function used in the evaluation of attack performance by $\L_a$.

\paragraph{Game model.}
In the presence of a poisoning $\A$ introduced by the game leader, the payoff function of the victim is the empirical risk in the training process:
\begin{align*}
    \J_c(\A,\theta)&\coloneqq \E_{(x,y)\sim \S} \big[\L_c(x+\A (x,y), y;\theta)\big].
\end{align*}
As the follower of the game, the victim should choose one of the \emph{best responses}:
\begin{align*}
    \theta^*\in \text{BR}(\A) \coloneqq \arginf_{\theta'}\J_c(\A, \theta').
\end{align*}
Due to the fact that neural network training usually reaches local optima, we consider the \emph{$\eta$-approximately best responses}:
\begin{align*}
\theta^*\in \text{BR}_{\eta}(\A) \coloneqq \{\theta | \J_c(\A, \theta) < \inf_{\theta'}\J_c(\A, \theta') + \eta\}.
\end{align*}
The payoff function of the attacker
is the negative value of population risk:
\begin{align}\label{equ:payoff-func-attacker}
    \J_a(\A,\theta) \coloneqq 
\sup_{\theta\in \text{BR}_{\eta}(\A)} \{- \E_{(x,y)\sim \Dd} \big[\L_a(x, y; \theta)\big]\}. 
\end{align}
The attacker should choose a \emph{Stackelberg strategy}
\begin{align}
\A^* \in\arginf_{\A}  \J_a(\A,\theta). \label{(2)}   
\end{align}
Though the Stackelberg strategy is not ensured unique, the Stackelberg cost is unique.
For a Stackelberg strategy $\A^*$, each $\theta^*\in \text{BR}_{\eta}(\A^*)$ is an approximate optimal strategy for the victim with respect to $\A^*$. 
We call the action profile $(\A^*, \theta^*)$ a {\em Stackelberg equilibrium} without distinction.
%
%

\subsection{Equilibrium Existence}\label{subsec:equilibrium-exist}
One of the primary concern for the game $\mathcal{G}$ is whether this game possesses an equilibrium.
We will prove the existence of the equilibrium, consequently providing a well-defined target for learning.
Next section will delve into the methods for solving this game.

Assume that the parameters $\theta$ of the victim classifier are restricted in a feasible strategy space $\Theta \subset \mathbb{R}^N$, where $N$ is the number of parameters.
The following common assumptions are required for the existence of a Stackelberg equilibrium. 
\begin{assumption}\label{ass:comp}
    The strategy space $\Theta$ is compact,
    for instance $\Theta=[-E,E]^N$ for $E\in\R_+$. 
\end{assumption}
\begin{assumption}\label{ass:l-lip}
    The loss functions $\L_a(x, y; \theta)$ and $ \L_c(x, y; \theta)$ are continuous in $x$ and $\theta$.
\end{assumption}

\begin{theorem}\label{thm:exist}
    Under Assumptions \ref{ass:comp} and \ref{ass:l-lip}, the unlearnable example game $\mathcal{G}$ has a Stackelberg equilibrium $(\A^*, \theta^*)$.
\end{theorem}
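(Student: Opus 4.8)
The plan is to reduce the statement to Weierstrass's extreme value theorem for lower semicontinuous functions on a compact set, where the compact set is the leader's (attacker's) strategy space. The first observation I would make is that since $\S$ is finitely sampled, writing $m=|\S|$, every poisoning $\A$ is determined by the tuple $(\A(x_i,y_i))_{i=1}^m$, so the admissible strategy set is $\mathcal{P}:=[-\epsilon,\epsilon]^{nm}$, which is compact. On $\mathcal{P}\times\Theta$ the victim's payoff $\J_c(\A,\theta)=\frac1m\sum_{i=1}^m\L_c(x_i+\A(x_i,y_i),y_i;\theta)$ is a finite sum of maps continuous in $(\A,\theta)$ by Assumption \ref{ass:l-lip}, hence jointly continuous. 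Since $\Theta$ is compact by Assumption \ref{ass:comp}, the infimum $v(\A):=\inf_{\theta}\J_c(\A,\theta)$ is attained and, by the maximum theorem (Berge), $v$ is continuous on $\mathcal{P}$.

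Next I would study the $\eta$-approximate best-response correspondence $\text{BR}_\eta(\A)=\{\theta:\J_c(\A,\theta)<v(\A)+\eta\}$. Because it is cut out by a strict inequality, it is lower hemicontinuous: if $\theta_0\in\text{BR}_\eta(\A)$ then $\J_c(\A,\theta_0)<v(\A)+\eta$, and continuity of $\J_c(\cdot,\theta_0)$ and of $v$ keeps this strict inequality valid for all $\A'$ near $\A$, so $\theta_0\in\text{BR}_\eta(\A')$. (It is also always nonempty, since the minimizer of $\J_c(\A,\cdot)$ achieves $v(\A)<v(\A)+\eta$.) The integrand $g(\theta):=-\E_{(x,y)\sim\Dd}[\L_a(x,y;\theta)]$ is continuous in $\theta$, by Assumption \ref{ass:l-lip} together with a routine dominated-convergence argument using boundedness of $\L_a$ on the relevant compact domain; in particular it is lower semicontinuous, which is all I need. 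Combining lower hemicontinuity of $\text{BR}_\eta$ with lower semicontinuity of $g$ then shows that $\J_a(\A)=\sup_{\theta\in\text{BR}_\eta(\A)}g(\theta)$ is lower semicontinuous: for $\A_k\to\A$ and any $\theta_0\in\text{BR}_\eta(\A)$, lower hemicontinuity supplies $\theta_k\in\text{BR}_\eta(\A_k)$ with $\theta_k\to\theta_0$, so $\liminf_k\J_a(\A_k)\ge\liminf_k g(\theta_k)\ge g(\theta_0)$, and taking the supremum over $\theta_0$ gives $\liminf_k\J_a(\A_k)\ge\J_a(\A)$.

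Finally, a lower semicontinuous function on the compact set $\mathcal{P}$ attains its infimum, so a Stackelberg strategy $\A^*\in\arginf_{\A}\J_a(\A)$ exists; picking any $\theta^*\in\text{BR}_\eta(\A^*)$ (nonempty, as noted) yields the equilibrium $(\A^*,\theta^*)$. I expect the main obstacle to be the lower-semicontinuity step, because the supremum defining $\J_a$ ranges over the $\A$-dependent set $\text{BR}_\eta(\A)$, so no naive continuity-plus-compactness argument applies and one must control the best-response map itself. The key point is that the open, $\eta$-approximate definition (a strict inequality) is precisely what makes $\text{BR}_\eta$ lower hemicontinuous; with exact best responses the correspondence would be closed, hence upper but not lower hemicontinuous, and $\J_a$ could fail to be lower semicontinuous, so the infimum over $\A$ need not be attained.
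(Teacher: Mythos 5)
Your proposal is correct and follows essentially the same route as the paper: joint continuity of $\J_c$ plus compactness gives continuity of the value function $v(\A)$, the strict inequality in the $\eta$-approximate best response makes $\text{BR}_\eta$ lower semicontinuous as a set-valued map, the marginal function $\sup_{\theta\in\text{BR}_\eta(\A)}g(\theta)$ is then lower semicontinuous (you prove inline what the paper imports from Aubin--Ekeland), and Weierstrass on the compact perturbation space finishes the argument. Your closing remark on why the open, approximate best-response set is what rescues lower semicontinuity is exactly the point the paper's construction relies on.
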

\begin{proof}[Proof sketch.]
Let $\Gamma$ be the feasible space for the perturbations on the training set, that is, $\A(\S)\subset \Gamma$.
With a slight abuse of notation, we denote $\A\in \Gamma$.

By Assumptions \ref{ass:comp} and \ref{ass:l-lip},   $\J_c(\A, \theta)$ is continuous, $\Theta$ and $\Gamma$ are compact. Then, for any $\A$, $\text{BR}_{\eta}(\A)$ is non-empty. Let $\J(\theta) \coloneqq - \E_{(x,y)\sim \Dd} \big[\L_a(x, y; \theta)\big]$. 
Thus the existence of Stackelberg equilibrium is equivalent to the existence of $\A^*\in \Gamma$ such that
\begin{align*}
\sup_{\theta\in \text{BR}_{\eta}(\A^*)}\J(\theta)= \inf_{\A} \sup_{\theta\in \text{BR}_{\eta}(\A)}\J(\theta).
\end{align*}
Since $\J(\theta)$ is continuous in $\Theta$, the marginal function 
\begin{align*}
    V(\A)\coloneqq \sup\limits_{\theta\in \text{BR}_{\eta}(\A)} \J(\theta)
\end{align*}
is lower semi-continuous. 
Then $V(\A)$ can attain the minimum as $\Gamma$ is compact, that is, there exists an $\A^*$ such that
\begin{align*}
    V(\A^*)= \inf_{\A\in \Gamma} V(\A)= \sup_{\theta\in \text{BR}_{\eta}(\A^*)}\J(\theta).
\end{align*}
Since $\text{BR}_{\eta}(\A)$ 
is not empty, there exists  a $\theta^*\in \text{BR}_{\eta}(\A^*)$, and the theorem is proved.
%
Details of the proof are given in subsection \ref{pf-th3}.
\end{proof} 

\subsection{Various Attacking Scenarios}
\label{subsec: various scenarios}
We have proved the existence of Stackelberg equilibrium with respect to general loss functions $\L_c$ and $\L_a$. 
Next, we will consider specific loss functions for different scenarios below.
As long as $\L_c$ and $\L_a$ satisfy Assumption \ref{ass:l-lip}, the specified unlearnable example game has an equilibrium.

\paragraph{Standard setting.}
The most common attack scenario for classification tasks leverages cross-entropy loss as both $\L_c$ and $\L_a$.
In this case, the attacker aims at reducing standard accuracy on the original data distribution while the victim trains classifier $f_\theta$ to improve standard accuracy on the poisoned training set.
We have the following corollary as a direct consequence of Theorem \ref{thm:exist}. 
\begin{cor}
     Let $\L_c=\L_a=\L_{ce}$, then the unlearnable example game $\G$ has a Stackelberg equilibrium.
\end{cor}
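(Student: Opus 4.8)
The plan is to observe that this corollary follows immediately from Theorem~\ref{thm:exist} once I verify that the specialized choice $\L_c=\L_a=\L_{ce}$ satisfies the hypotheses of that theorem. Assumption~\ref{ass:comp} concerns only the strategy space $\Theta$ and is independent of which loss is used, so it remains in force. Thus the entire content of the proof reduces to checking Assumption~\ref{ass:l-lip}: that the cross-entropy loss $\L_{ce}(x,y;\theta)$ is continuous in both $x$ and $\theta$.

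To do this, I would first write the cross-entropy loss explicitly. Denoting by $f_\theta(x)\in\R^K$ the vector of logits produced by the classifier on input $x$, the loss takes the form
\begin{align*}
\L_{ce}(x,y;\theta) = -f_\theta(x)_y + \log \sum_{k=1}^{K} \exp\big(f_\theta(x)_k\big).
\end{align*}
I would then argue continuity by composition: the map $(x,\theta)\mapsto f_\theta(x)$ is jointly continuous for the standard neural architectures under consideration, since these are built from affine layers (continuous in both inputs and parameters) composed with continuous activations; the exponential and the finite sum preserve continuity; and because $\sum_{k}\exp(f_\theta(x)_k)>0$ everywhere, the outer logarithm is applied on the strictly positive reals, where it is continuous. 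Hence $\L_{ce}$ is a composition of continuous functions and is therefore continuous in $x$ and $\theta$, establishing Assumption~\ref{ass:l-lip}.

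With both assumptions of Theorem~\ref{thm:exist} in place, I would simply invoke the theorem to conclude that $\G$ possesses a Stackelberg equilibrium $(\A^*,\theta^*)$. I do not expect any serious obstacle here, as the argument is essentially a verification step; the only point warranting care is the implicit regularity requirement on the network $f_\theta$, namely that its activation functions be continuous. This holds for ReLU and the usual smooth activations, so the continuity of $f_\theta$—and hence of $\L_{ce}$—is guaranteed for the architectures used in practice.
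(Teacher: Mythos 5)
Your proposal is correct and follows essentially the same route as the paper, which states this corollary as a direct consequence of Theorem~\ref{thm:exist}; the only content is verifying Assumption~\ref{ass:l-lip} for $\L_{ce}$, which you do correctly via continuity of the composition $(x,\theta)\mapsto f_\theta(x)\mapsto -f_\theta(x)_y+\log\sum_k\exp(f_\theta(x)_k)$. Your explicit remark that this requires continuous activations is a reasonable (and implicitly assumed) regularity condition that the paper leaves unstated.
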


In practice, solving this game also requires taking into account the effectiveness of the algorithm.
Searching for a Stackelberg strategy of the attacker refers to minimizing its payoff function in Equation \eqref{equ:payoff-func-attacker} involving $\L_{ce}$.
However, the cross-entropy loss has no upper bound, and thus naturally lacks a criterion for convergence of such an optimization. 
Furthermore, it results in a gradient explosion, as will be discussed in the experimental section.

To tackle this issue, we instead leverage a surrogate loss function which is upper-bounded and is equivalent to cross-entropy loss for solving the game.

\paragraph{Surrogate loss.} 

We propose a novel loss function for the attacker to measure the poison performance:
\begin{equation}\label{eq:max}
    \L_{sur}(x, y; \theta) \coloneqq - \max_{k\in [K]\setminus\{y\}}\L_{ce}(x, k;\theta).
\end{equation}
%
Intuitively, maximizing $ \L_{sur}$ means minimizing the cross-entropy loss of $x$ with respect to all other labels instead of $y$, leading to the misclassification of the model trained on the poisoned training set. 
When maximizing $\L_{ce}$, it often forms a deep valley on the true label in the label confidence distribution.
On the other hand, when maximizing $\L_{sur}$, in addition to forming a deep valley on the true label, it also makes the confidence levels of other labels uniform.

The following properties of surrogate loss allow our method  
to effectively solve the game.
Moreover, an equilibrium still exists for surrogate loss.
\begin{prop}
Assume that the data have $K$ classes.
\begin{enumerate}
    \item $\L_{sur}$ is upper-bounded:
    \begin{align*}
        \L_{sur}(x, y; \theta) \le -\log (K-1).
    \end{align*}
    \item $\L_{ce}$ grows as $\L_{sur}$ grows:
    \begin{align*}
        \L_{ce}(x, y;\theta)\ge -\log(1-(K-1)e^{\L_{sur}(x, y; \theta)}). 
    \end{align*}
\end{enumerate}
\end{prop}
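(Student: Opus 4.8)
The plan is to rewrite both quantities in terms of the softmax output probabilities of the classifier and then reduce each claim to an elementary inequality about a probability vector. Write $p_k = p_k(x;\theta)$ for the softmax probability the classifier assigns to class $k$, so that $\L_{ce}(x,k;\theta) = -\log p_k$, with $p_k \ge 0$ and $\sum_{k=1}^K p_k = 1$. Since $-\log$ is strictly decreasing, the maximum over $k \neq y$ of $\L_{ce}(x,k;\theta)$ is attained at the \emph{smallest} of the off-label probabilities, which yields the key identity
\begin{align*}
\L_{sur}(x,y;\theta) = -\max_{k \in [K]\setminus\{y\}} (-\log p_k) = \log\Big(\min_{k \in [K]\setminus\{y\}} p_k\Big).
\end{align*}
Everything then follows from controlling $m \coloneqq \min_{k \neq y} p_k = e^{\L_{sur}(x,y;\theta)}$.

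For the first claim I would use an averaging bound: the minimum of the $K-1$ off-label probabilities is at most their average, and their average is at most $1/(K-1)$ because they sum to $1 - p_y \le 1$. Hence $m \le 1/(K-1)$, and applying the increasing function $\log$ to the identity above gives $\L_{sur} = \log m \le \log\big(1/(K-1)\big) = -\log(K-1)$.

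For the second claim I would argue in the opposite direction. Since $m$ is the minimum over the off-label classes, each such $p_k$ satisfies $p_k \ge m$, so
\begin{align*}
1 - p_y = \sum_{k \in [K]\setminus\{y\}} p_k \ge (K-1)\,m = (K-1)\,e^{\L_{sur}(x,y;\theta)}.
\end{align*}
Rearranging gives $p_y \le 1 - (K-1)e^{\L_{sur}(x,y;\theta)}$, and applying the decreasing function $-\log$ produces $\L_{ce}(x,y;\theta) = -\log p_y \ge -\log\big(1 - (K-1)e^{\L_{sur}(x,y;\theta)}\big)$, as claimed.

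The argument is almost entirely routine once the identity $\L_{sur} = \log m$ is in hand; the only point requiring care is the degenerate case $m = 1/(K-1)$, where $1 - (K-1)e^{\L_{sur}} = 0$. There the bound from the first claim forces $p_y = 0$ and hence $\L_{ce} = +\infty$, so the inequality in the second claim holds in the extended-real sense. I expect this boundary case, together with the observation that the argument of the logarithm in claim~2 is nonnegative precisely by virtue of claim~1, to be the only subtlety rather than a genuine obstacle.
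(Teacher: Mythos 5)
Your proof is correct and complete. The paper itself states this proposition without proof, so there is nothing to compare against; your argument---rewriting $\L_{sur}(x,y;\theta)$ as $\log\big(\min_{k\neq y}p_k\big)$ via the monotonicity of $-\log$, then bounding that minimum above by the average $\tfrac{1-p_y}{K-1}\le\tfrac{1}{K-1}$ for claim~1 and below-bounding $\sum_{k\neq y}p_k$ by $(K-1)\min_{k\neq y}p_k$ for claim~2---is the canonical route, and your handling of the boundary case $p_y=0$ (where both sides of claim~2 are $+\infty$) is the right way to close the one genuine edge case.
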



\begin{cor}\label{cor:gue}
    Let $\L_c= \L_{ce}$ and $\L_a=\L_{sur}$. We denote this game as $\G_{ue}$. Then this unlearnable example game $\G_{ue}$ has a Stackelberg equilibrium $(\A_{ue}^*, \theta_{ue}^*)$.
\end{cor}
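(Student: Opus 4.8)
The plan is to recognize this corollary as a direct application of Theorem \ref{thm:exist}, for which the only thing left to check is Assumption \ref{ass:l-lip}: both loss functions must be continuous in $x$ and $\theta$. (Assumption \ref{ass:comp} concerns only the compactness of the strategy space $\Theta$ and is independent of the choice of loss.) Thus the entire task reduces to verifying the continuity of $\L_{ce}$ and of the surrogate loss $\L_{sur}$, after which the general existence result applies verbatim.

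First I would handle $\L_c = \L_{ce}$. Writing the cross-entropy loss through the softmax output of the classifier as $\L_{ce}(x, y; \theta) = -\log\big(\text{softmax}(f_\theta(x))_y\big)$, continuity in $(x, \theta)$ follows because the network $f_\theta(x)$ is continuous in both arguments (standard for the continuous-activation networks considered here), the softmax is a smooth map into the interior of the probability simplex, and $-\log$ is continuous on $(0,1]$. Hence $\L_{ce}$ meets the continuity requirement of Assumption \ref{ass:l-lip}.

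Next I would treat $\L_a = \L_{sur}$, defined in Equation \eqref{eq:max} as $\L_{sur}(x, y; \theta) = -\max_{k \in [K]\setminus\{y\}} \L_{ce}(x, k; \theta)$. By the previous step each term $\L_{ce}(x, k; \theta)$ is continuous in $(x, \theta)$, and the index set $[K]\setminus\{y\}$ is finite. The key fact is that the pointwise maximum of finitely many continuous functions is continuous; negation preserves continuity, so $\L_{sur}$ is continuous in $x$ and $\theta$ as well. This verifies Assumption \ref{ass:l-lip} for the pair $(\L_c, \L_a) = (\L_{ce}, \L_{sur})$.

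With both assumptions confirmed, I would invoke Theorem \ref{thm:exist} (as already noted in the remark preceding the statement, any $\L_c, \L_a$ satisfying Assumption \ref{ass:l-lip} yields an equilibrium) to conclude that $\G_{ue}$ admits a Stackelberg equilibrium $(\A_{ue}^*, \theta_{ue}^*)$. The only genuinely nontrivial point — and hence the main obstacle — is the continuity of $\L_{sur}$, since the $\max$ operation could a priori introduce irregularity; however, because Assumption \ref{ass:l-lip} requires continuity rather than differentiability, the elementary max-of-continuous-functions observation fully resolves it, and no additional smoothness of the surrogate loss is needed.
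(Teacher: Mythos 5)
Your proposal is correct and matches the paper's treatment: the corollary is stated there as a direct consequence of Theorem \ref{thm:exist}, with the only content being that $\L_{ce}$ and $\L_{sur}$ satisfy the continuity requirement of Assumption \ref{ass:l-lip}, which you verify via continuity of the softmax--log composition and the fact that a maximum of finitely many continuous functions is continuous. Nothing is missing.
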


\paragraph{Adversarial setting.}
Recent studies \cite{tao2021better} have suggested that adversarial training can mitigate the impact of unlearnable example attacks.
Then poisoning approaches \cite{fu2022rem, 2023inf} were proposed to work for adversarially trained models. 

Our unified game framework also includes the adversarial scenario in which the victim is aware of the potentially poisoned training set and decides to deploy adversarial training and the attacker still wants to prevent the victim from obtaining an available model.

We  need only to modify the training loss $\L_c$ to adversarial settings.
Here we consider the two widely-used adversarial losses:
\begin{itemize}
    \item Adversarial Loss \cite{madry2017at}:
    \begin{align*}
       \L_{adv}(x, y;\theta)\coloneqq \max\limits_{||\mu||_\infty\le \epsilon_d} \L_{ce}(x+\mu, y;\theta)
    \end{align*}
where $\epsilon_d$ is the adversarial training radius. 
    \item TRADES Loss \cite{zhang2019trades}:
    \begin{align*}\label{eq:trades}
    \L_{tra}(x, y;\theta) \coloneqq 
    &\frac{1}{\lambda}\max\limits_{||\mu||_\infty\le \epsilon_d}\text{KL}(f_\theta(x)||f_\theta(x+\mu))\\
    &+\L_{ce}(x, y;\theta).
    \end{align*}
\end{itemize}
The following lemma states that $L_{adv}$ and $L_{tra}$ satisfy Assumption \ref{ass:l-lip}.
\begin{lemma}
\label{ln-ad11}
      $\L_{adv}(x, y; \theta)$ and $\L_{tra}(x, y; \theta)$ are continuous. 
\end{lemma}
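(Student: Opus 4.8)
The plan is to reduce both claims to the standard fact that maximizing a jointly continuous objective over a fixed compact set preserves continuity in the remaining variables. Write the inner objective of the adversarial loss as $g(x,\theta,\mu) \coloneqq \L_{ce}(x+\mu, y; \theta)$ and the inner objective of the TRADES penalty as $h(x,\theta,\mu) \coloneqq \text{KL}\big(f_\theta(x)\,\|\,f_\theta(x+\mu)\big)$, where the perturbation ranges over the fixed $\ell_\infty$-ball $C \coloneqq \{\mu : \|\mu\|_\infty \le \epsilon_d\}$; note that $C$ is compact and does not depend on $(x,\theta)$. It then suffices to show (i) that $g$ and $h$ are jointly continuous in $(x,\theta,\mu)$, and (ii) that $\max_{\mu\in C}$ of such a function is continuous in $(x,\theta)$. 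Granting these, $\L_{adv}=\max_{\mu\in C} g$ and the penalty $\max_{\mu\in C} h$ are continuous, and $\L_{tra}$ is continuous as $\tfrac{1}{\lambda}$ times the latter plus the continuous term $\L_{ce}(x,y;\theta)$.

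For step (i), joint continuity of $g$ is immediate: the cross-entropy loss $\L_{ce}$ is continuous in its data argument and in $\theta$, and is here composed with the continuous shift $(x,\mu)\mapsto x+\mu$. The term $h$ is more delicate, and I expect it to be the main obstacle. The issue is that $\text{KL}(p\,\|\,q)=\sum_i p_i\log(p_i/q_i)$ can blow up when a coordinate of $q$ approaches $0$, so one must rule this out. I would use that the network output $f_\theta(x)$ is a softmax probability vector, hence continuous in $(x,\theta)$ with strictly positive coordinates; restricting to a compact neighborhood $\bar X\times\bar\Theta\times C$, these coordinates are bounded away from $0$, so on this set $h$ is a finite composition of continuous functions and is therefore jointly continuous.

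For step (ii), I would argue directly rather than cite Berge's maximum theorem. Fix $(x_0,\theta_0)$ and a compact neighborhood $\bar X\times\bar\Theta$ inside $\X\times\Theta$; on the compact set $\bar X\times\bar\Theta\times C$ the objective is uniformly continuous, so for every $\varepsilon>0$ there is $\delta>0$ with $|g(x,\theta,\mu)-g(x_0,\theta_0,\mu)|<\varepsilon$ for all $\mu\in C$ whenever $\|(x,\theta)-(x_0,\theta_0)\|<\delta$. Combined with the elementary inequality $|\max_{\mu} a(\mu)-\max_{\mu} b(\mu)|\le\sup_{\mu}|a(\mu)-b(\mu)|$, this gives $|V(x,\theta)-V(x_0,\theta_0)|<\varepsilon$ for $V(x,\theta)\coloneqq\max_{\mu\in C} g(x,\theta,\mu)$, establishing continuity at $(x_0,\theta_0)$; the identical argument applies to $h$, which closes the proof.
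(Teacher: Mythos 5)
Your proof is correct, and it reaches the conclusion by a genuinely different (and somewhat tighter) mechanism than the paper's. Both arguments rest on the same core fact---the maximum over the fixed compact ball $C=\{\mu:\|\mu\|_\infty\le\epsilon_d\}$ of a continuous objective is continuous in the remaining variables---but the paper establishes it by sandwiching $\L_{adv}(\cdot,\theta_1)-\L_{adv}(\cdot,\theta_2)$ between evaluations of the inner objective at the two maximizers, using a pointwise modulus of continuity $\Delta(\cdot)$ at each maximizer and taking the minimum of the two; you instead invoke uniform continuity of the objective on a compact neighborhood together with the elementary inequality $|\max_\mu a(\mu)-\max_\mu b(\mu)|\le\sup_\mu|a(\mu)-b(\mu)|$. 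Your route is slightly more robust: in the sandwich argument the modulus $\Delta(\mu_1^*)$ is attached to a maximizer that itself moves as $\theta_1$ varies toward $\theta_2$, a subtlety the uniform-continuity argument sidesteps entirely. The other substantive difference is your treatment of the TRADES loss: the paper dispatches it with ``similarly,'' whereas you isolate the one step that is not automatic---joint continuity of $\text{KL}\big(f_\theta(x)\,\|\,f_\theta(x+\mu)\big)$, which could in principle fail where coordinates of the second argument approach zero---and close it by noting that softmax outputs are strictly positive and hence bounded away from zero on compact sets. That addition is worthwhile, since it is precisely the point at which the lemma's second claim needs an argument rather than an assertion; both of your steps (i) and (ii) are sound as written.
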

As a consequence of Theorem \ref{thm:exist}, the equilibrium exists in the adversarial scenario.
\begin{cor}\label{cor:gue-at}
Let $\L_c= \L_{adv}$ or $\L_{tra}$, and $\L_a= \L_{sur}$. We denote this game by $\G_{at}$.
Then this unlearnable example game $\G_{at}$ has a Stackelberg equilibrium $(\A_{at}^*, \theta_{at}^*)$.
\end{cor}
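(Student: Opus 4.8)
The plan is to deduce Corollary~\ref{cor:gue-at} directly from Theorem~\ref{thm:exist}, treating it as a routine instantiation. Since Assumption~\ref{ass:comp} is already imposed on the strategy space $\Theta$, the only thing I would need to verify is that the specified loss functions $\L_c\in\{\L_{adv},\L_{tra}\}$ and $\L_a=\L_{sur}$ meet the continuity requirement of Assumption~\ref{ass:l-lip}. Once this is established, Theorem~\ref{thm:exist} applies verbatim to the game $\G_{at}$ and yields a Stackelberg equilibrium $(\A_{at}^*,\theta_{at}^*)$, which is exactly the claim.

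First I would dispatch the attacker's loss $\L_a=\L_{sur}$. By definition $\L_{sur}(x,y;\theta)=-\max_{k\in[K]\setminus\{y\}}\L_{ce}(x,k;\theta)$, which is the negation of a pointwise maximum of the finitely many functions $\{\L_{ce}(\cdot,k;\cdot)\}_{k\neq y}$. Because cross-entropy is continuous in $x$ and $\theta$, and the maximum of finitely many continuous functions is again continuous (as is its negation), $\L_{sur}$ is continuous in $x$ and $\theta$; hence $\L_a$ satisfies Assumption~\ref{ass:l-lip}. For the victim's training loss $\L_c$, continuity is precisely the content of Lemma~\ref{ln-ad11}, which asserts that both $\L_{adv}$ and $\L_{tra}$ are continuous, so I would simply invoke it. With $\L_a$ and $\L_c$ both continuous and $\Theta$ compact, every hypothesis of Theorem~\ref{thm:exist} holds and the equilibrium exists.

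The only genuinely nontrivial link in this chain is Lemma~\ref{ln-ad11}, and that is where the main obstacle would lie if it were not already available. The difficulty is that $\L_{adv}$ and the KL term of $\L_{tra}$ are defined through an inner maximization over the perturbation ball $\{\|\mu\|_\infty\le\epsilon_d\}$, and continuity of such a value function does not follow from continuity of the integrand alone. What makes it go through is that the maximization domain is a fixed compact set independent of $(x,\theta)$ while the integrand is jointly continuous; a maximum-theorem (Berge) or uniform-continuity argument on the compact product domain then transfers continuity through the $\max_\mu$. Since the perturbation ball is compact and constant in $(x,\theta)$, this holds, and Lemma~\ref{ln-ad11} records the conclusion I rely on, leaving the corollary itself an immediate specialization of Theorem~\ref{thm:exist}.
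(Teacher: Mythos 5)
Your proposal is correct and follows essentially the same route as the paper: verify that $\L_{sur}$, $\L_{adv}$, and $\L_{tra}$ satisfy the continuity requirement of Assumption~\ref{ass:l-lip} (the latter two via Lemma~\ref{ln-ad11}, which the paper proves by exactly the compactness/uniform-continuity sandwich argument you sketch) and then apply Theorem~\ref{thm:exist} directly. Your explicit check that $\L_{sur}$ is continuous as the negation of a finite maximum of continuous functions is a small point the paper leaves implicit, but it does not change the argument.
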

It implies that there exists a poisoning attack which can degrade the classification performance of a model that obtains approximately optimal parameters through adversarial training on the poisoned training set. We mainly consider the case with $\L_c=\L_{tra}$ in the following paper.

\begin{remark}
\label{remark-acc}
Similar to \cite{ATgame},  
by using the loss function $\L_{cw}(f_{\theta}(x), y)=\max_{l\neq y}[f_{\theta}(x)]_l- [f_{\theta}(x)]_y$, 
we can prove that the corresponding game equilibrium exists and gives the most powerful poison attack in that the victim has the lowest test accuracy among all networks within the hypothesis space,  when trained on poisoned training set.
\end{remark}
%

\section{Unlearnable Example Generator} \label{sec:method}
In the preceding section, we theoretically established the unlearnable example game framework and demonstrated the existence of equilibrium in various scenarios.

In this section,  by solving the game using a first-order algorithm, we propose a novel unlearnable example generator that can effectively generalize to future data added to the training set. 
By doing so, we aim to reduce the high dependency of unlearnable examples on the poisoning ratio to better align with the real-world need for protecting unauthorized data.
Moreover, this game-theoretic approach can extend to the adversarial setting.

\subsection{Protection of Unauthorized Data}
In the era of big data, countless personal data is uploaded to servers through various applications, published on the Internet, and can be accessed by others without restrictions.
If these data are obtained without authorization by malicious individuals and used to train machine learning models, there is a risk that the user's personal privacy information being misused.
As a means of protecting unauthorized data, unlearnable examples are used to preprocess upcoming data before its release, rendering models trained on the processed data unusable.

\paragraph{Challenges.} 
However, there are some common issues that hinder the deployment and application of existing unlearnable example methods in the real world. 
Challenges include
\begin{itemize}
    \item Dependency on extremely high poisoning ratio.  
    \item Inefficiency of poison generation.
    \item Vulnerability to recovery approaches.
\end{itemize}
First, even a slight decrease in the poisoning ratio can result in a substantial decrease in unlearnability \cite{huang2021unlearnable}.
When a poisoned training set is diluted by a few new clean samples, the protection of the data will be destroyed.
It brings an essential obstacle to the application of unlearnable examples in the real world.
Furthermore, most of the existing unlearnable example attacks are based on an enormous number of iterative propagation and back-propagation \cite{adv_poison}.
When it comes to redeploying poisoning to new data flow, the issue of inefficiency becomes more prominent. 
Last but not least, adversarial training can restore the usability of models trained on training sets polluted by poisons that have not been specifically designed for adversarial training \cite{tao2021better}. 
Class-wise perturbations are easier to recover compared to sample-wise perturbations \cite{sandoval2022poison-faster}.

\subsection{Generalizable Poison Generator}
To mitigate the aforementioned problems, we propose a novel approach that differs from existing methods. Instead of empirically optimizing perturbations to create shortcuts for learning, we suggest using a game-theoretic framework to train an unlearnable example generator that can generalize to unseen data effectively.

In detail, our approach involves having the attacker in the game utilize a poison generator $g_\omega$ based on an encoder-decoder with parameters $\omega$. 
That is, $\A(x,y)=g_\omega(x)$ which outputs sample-wise perturbations.
The activation for the final layer is $\epsilon\cdot \tanh(\cdot)$ to control the perturbation within the poison radius $\epsilon$, i.e. $||g_w(x)||_\infty \le \epsilon$.

Now the classifier aims at minimizing the payoff function:
\begin{align*}
    \J_c(w,\theta)=\E_{(x,y)\sim S}\big[ \L_c(x+g_w(x), y;\theta)\big]
\end{align*}
by choosing parameters
\begin{align*}
    \theta^*\in \text{BR}_{\eta}(w) = \{\theta|\J_c(w, \theta) < \inf_{\theta'}\J_c(w, \theta') + \eta\}.
\end{align*}
The attacker chooses generator parameters $w^*$ to minimize the payoff function:
\begin{align}
    \J_a(\omega,\theta) \coloneqq 
\sup_{\theta\in \text{BR}_{\eta}(\omega)}  \{-\E_{(x,y)\sim \S} \big[\L_a(x, y; \theta)\big]\}. 
\end{align}
To make it feasible in practical computation, here we consider the loss on clean training set $\S$ instead of the data distribution $\Dd$ in Equation \eqref{equ:payoff-func-attacker}.

Once a generator $g_\omega$ is well trained on a given training set, it costs only forward propagation to generate poisons for images in the training set and potentially for future images that may be added to the training set.
Due to the flexible selection of $\L_c$ and $\L_a$, the generator is able to deal with different scenarios, including standard and adversarial settings.


\subsection{Compute the Game Equilibrium}

In fact, a Stackelburg game can be simplified to a bi-level optimization problem:
\begin{eqnarray}
\label{equ:bi-level}
        &\min_{w, \theta} l(w, \theta) \\ 
    \text{s.t.} \quad \theta &\in \argmin_{\theta'} h(w, \theta'). \nonumber
\end{eqnarray}
%
In the context of an unlearnable example game, we have $l=\J_a$ and $h=\J_c$.
Directly solving a bi-level optimization problem requires the calculation of the second-order Hessian matrices, which makes it impractical for common machine learning tasks with high-dimensional inputs and parameters.

We leverage previously proposed BOME \cite{bome} and dynamic barrier gradient descent algorithm \cite{DynamicBGD} to efficiently train an unlearnable example generator, the detailed algorithm is presented in algorithm \ref{alg:algorithm}.
BOME is a recently developed first-order bi-level optimization algorithm that employs a value function approach to transform the bi-level problem into a single-level constrained optimization problem. 
Then the single-level problem is solved by the dynamic barrier gradient descent algorithm, which has the notable advantage of not requiring the lower-level problem to possess a unique solution. 

\paragraph{BOME.}
Assume $h(w, \cdot)$ can attain a minimum for each $w$.
Problem \eqref{equ:bi-level} is equivalent to the following constrained optimization (even for nonconvex $h$):
\begin{align*}
    &\min_{w, \theta} l(w, \theta) \\
    \text{s.t.} \quad q(w, \theta)&\coloneqq h(w, \theta) - h(w, \theta^*(w))\le 0,
\end{align*}
where $h(w, \theta^*(w))=\min_\theta h(w, \theta)$.
In practice, we approximate $\theta^*(w)$ by $\theta^T(w)$ which is the $T$ step gradient descent of $h(w, \cdot)$ over $\theta$ for some step size $\alpha>0$ :
\begin{equation}
    \theta^{t+1}(w)=\theta^{t}(w) - \alpha \nabla_\theta h(w, \theta^{t}(w)).
    \label{eq:T-step}
\end{equation}
Then we obtain an estimate of $q(w, \theta)$:
\begin{align*}
    \widehat{q}(w, \theta) = h(w, \theta) - h(w, \theta^T(w)).
\end{align*}

\paragraph{Dynamic barrier gradient descent.}
The method is to iteratively update $(w, \theta)$ with step size $\beta >   0$ to reduce $l$ while controlling the decrease of $q$:
\begin{equation*}
    (w_{k+1}, \theta_{k+1}) \leftarrow (w_{k}, \theta_{k}) - \beta (\nabla l(w_{k}, \theta_{k}) + \lambda_{k} \nabla \widehat{q}(w_{k}, \theta_{k}))
\end{equation*}
where 
\begin{align*}
    \lambda_k = \max(\frac{\phi_k - \left \langle \nabla l(w_{k}, \theta_{k}), \nabla\widehat{q}(w_{k}, \theta_{k})\right \rangle}{||\nabla \widehat{q}(w_{k}, \theta_{k})||^2}, 0)
\end{align*} 
and $\phi_k= \rho ||\nabla \widehat{q}(w_{k}, \theta_{k})||^2$ by default with a hyper-parameter $\rho >0$.

\section{Experiments}
\label{sec:experiments}

In this section, we conduct comprehensive experiments to validate the effectiveness of GUE on popular benchmark datasets. Here, we consider not only standard training but also adversarial training.

\subsection{Experiment Settup}
We mainly conduct experiments on image classification datasets: CIFAR-10, CIFAR-100 which have been commonly used in the poisoning literature. We use ResNet-18 \cite{resnet} as classifier $f(\theta)$ and U-Net \cite{unet} as poison generator $g_{w}$ during training.  If not explicitly mentioned, we focus on the reasonable setting with poison radius $\epsilon = 8/255$. 

We use the test accuracy on clean test set to assess the effectiveness of unlearnable example attacks; the lower accuracy implies that the attack is stronger to prevent the model from learning information from the poisoned training dataset. 

\begin{algorithm}[tb]
\caption{Training of unlearnable example generator}
\label{alg:algorithm}
\textbf{Input}: Training set $\S$, inner step $T$, inner and outer step size $\alpha, \beta$, batch size $b$, train epochs $e$.\\
\textbf{Output}: Learned poison Generator $g_w$

\begin{algorithmic}[1] 
\STATE Initialize classifier $f_\theta$ and attacker $g_w$
\FOR{$k = 1$ to $e$}
    \STATE Sample a mini-batch $\{(x_i, y_i)\}_{i=1}^b \sim \S$
    \STATE Compute $\theta^T(w_k)$ by $T$ steps gradient descent on $J_c(w_k, \cdot)$ starting from $\theta_k$(like Eq.\eqref{eq:T-step})
    \STATE set $\widehat{q}(w_k, \theta_k) = J_c(w_k, \theta_k) - J_c(w_k, \theta^T(w_k))$
    \STATE Update $(w, \theta)$:
    $$\theta_{k+1}\leftarrow \theta_{k} -\beta (\nabla J_a(\theta_{k}) + \lambda_{k} \nabla_{\theta_k} \widehat{q}(w_{k}, \theta_{k}))$$
    $$w_{k+1}\leftarrow w_{k}-\beta (\lambda_{k} \nabla_{w_k} \widehat{q}(w_{k}, \theta_{k}))$$
    where $\lambda_k = \max(\frac{\phi_k - \left \langle \nabla J(\theta_{k}), \nabla_{\theta_k}\widehat{q}(w_{k}, \theta_{k})\right \rangle}{||\nabla \hat{q}(w_{k}, \theta_{k})||^2}, 0)$ and $\phi_k= \rho ||\nabla \widehat{q}(w_{k}, \theta_{k})||^2$.
    \\(Set $\rho=1.5$ and $T=10$ as default)
\ENDFOR
\STATE \textbf{return} $g_w$
\end{algorithmic}
\end{algorithm}

\subsection{Standard Training}
In this subsection, we conduct experiments under standard training setting to verify the effectiveness of our GUE attack by directly solving the game $\G_{ue}$ defined in Corollary \ref{cor:gue}.

We train 50 epochs to generate GUE with algorithm \ref{alg:algorithm}, using SGD optimizer with learning rate 0.01 for classifier $f_\theta$ and SGD optimizer with learning rate 0.1 for attacker $g_{w}$. And use Adam \cite{kingma2014adam} with learning rate 0.001 for the inner T steps approximation. And for evaluation of unlearnable examples, we train a model on poisoned dataset for 100 epochs using SGD optimizer with an initial learning rate of 0.01 that is decayed by a factor of 0.1 at the 75-th and 90-th training epochs. The optimizer is set with momentum 0.9 and weight decay $5\times 10^{-4}$.

\paragraph{Compare to different unlearnable example attacks.}
We first evaluate the performance of our GUE and different state-of-the-art poisoning methods under standard training, including DeepConfuse \cite{zhou2019confuse}, Unlearnable Examples(EM) \cite{huang2021unlearnable}, Targeted Adversarial Poisoning(TAP) \cite{adv_poison} and ShortcutGen \cite{shortcutgen}. 
As shown in Table \ref{table_methods}, our GUE can substantially decrease the clean test accuracy
and performs better than all methods except EM.
Furthermore, as we can see from Figure \ref{fig-evaluation}, our GUE remains unlearnably effective throughout all training epochs, while all other attacks exhibit an accuracy peak at the beginning of the training.

\begin{table}[t]
	\centering
	\begin{tabular}[c]{cccc}
    	\toprule
		{Poison method} & {CIFAR10} & {CIFAR100} \\
		\midrule
        None(Clean) &  92.36 & 70.59\\
		EM & 10.16 & 1.90\\ 
		TAP & 20.28 & 15.40\\
  	\midrule
        DeepConfuse  &  22.74 & 25.73\\
        ShortcutGen & 24.42 & 8.62 \\
        GUE & 13.25 & 8.35\\
		\bottomrule
	\end{tabular}
    \caption {Test accuracy of ResNet-18 trained on poisoned data from different unlearnable example attacks on CIFAR.}
    \label{table_methods}
\end{table}

\begin{figure}[t]
\centering
\includegraphics[width=0.9\columnwidth]{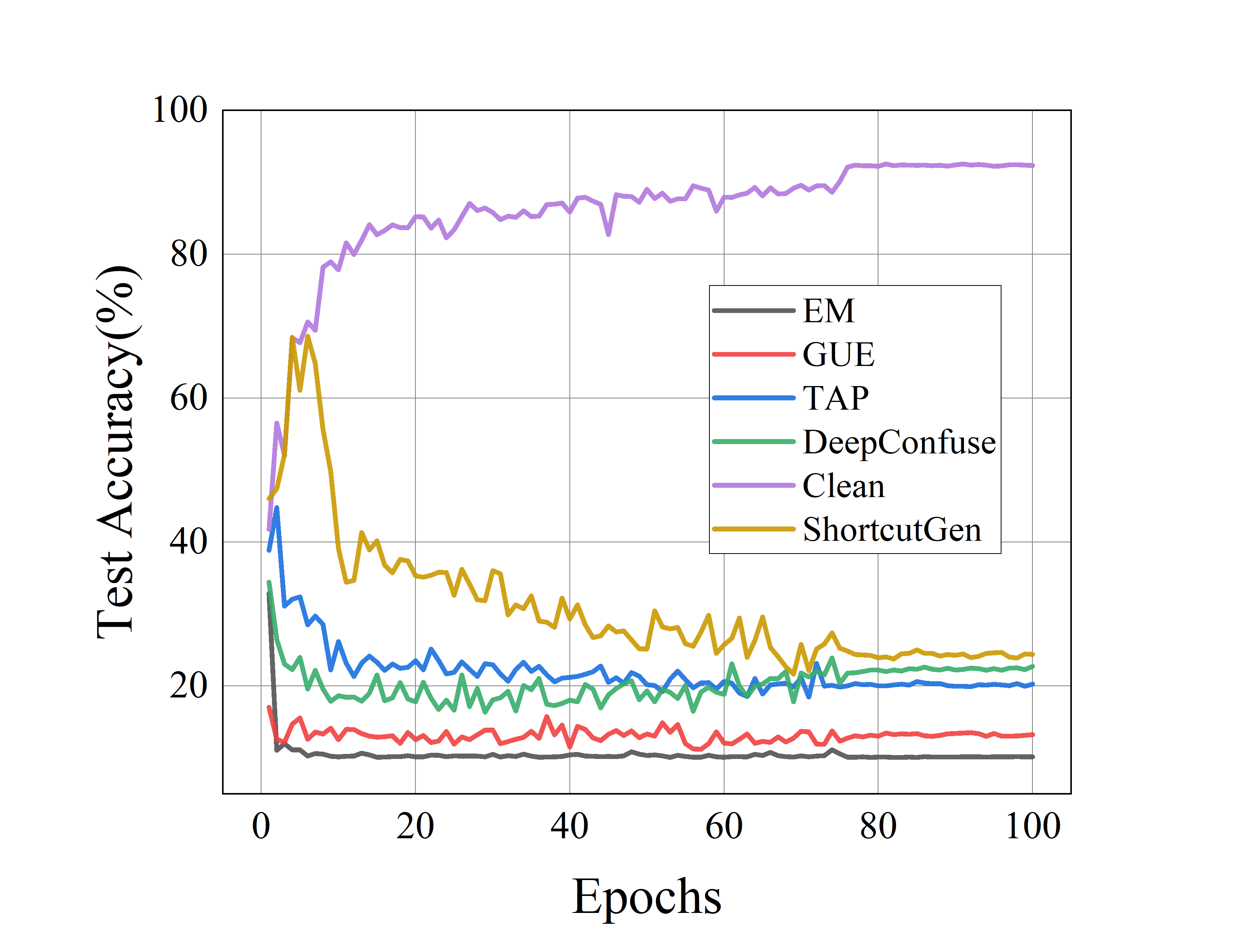} 
\caption{Test accuracy curves of ResNet-18 trained on poisoned data from different unlearnable example attacks on CIFAR-10.}
\label{fig-evaluation}
\end{figure}


\paragraph{Different percentages of data to train $g_w$ and generalizability.} In practice, it is not always possible that the attacker has access to the full training dataset, the victim classifier may collect more data to train a model. 
It has been observed that other unlearnable example attacks such as EM or TAP would fail to poison the model when there is a small proportion of clean data, such as $10\%$ from the training dataset. Thus, when there are more clean data, these attacks require regenerating corresponding unlearnable examples on the entire training data after adding new clean data. 
This motivates us to examine the generalizability of the poison generator $g_w$ trained on only a proportion of randomly selected training examples, and when there are more clean data, we can use $g_w$ to generate unlearnable examples for these data.
\begin{figure}[t]
\centering
\includegraphics[width=0.9\columnwidth]{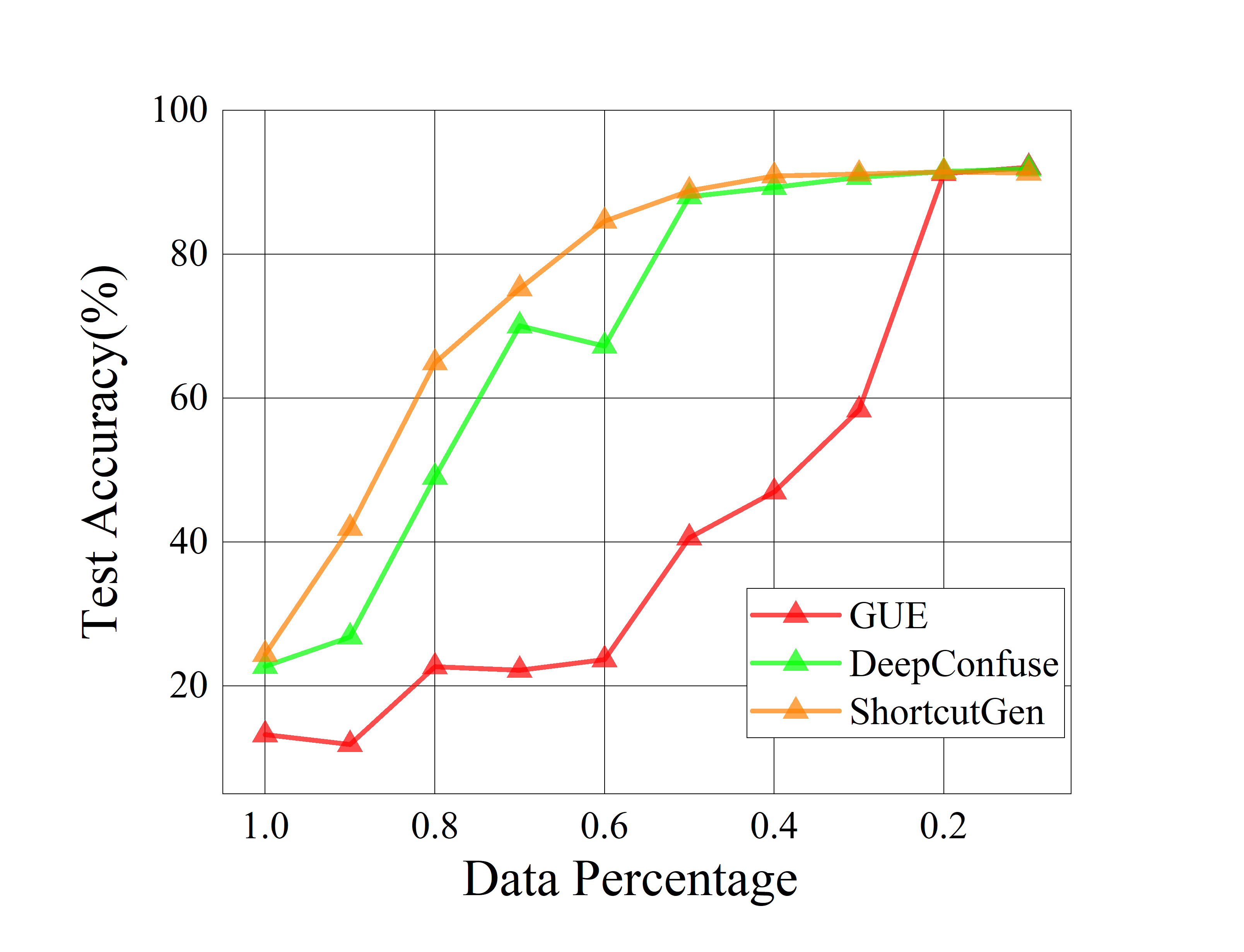} 
\caption{Test accuracy of ResNet-18 trained on poisoned CIFAR-10, where the poison generator is trained on different percentage of training data.}
\label{fig-part1}
\end{figure}

We use different percentages of training data to train the poison generator, then use the corresponding generator to generate unlearnable examples on the entire training dataset. The results are shown in Figures \ref{fig-part1} and \ref{fig-part2}. We can observe that our GUE's generalizability is much better than DeepConfuse and ShortcutGen. Specifically, our GUE is still effective in degrading the test accuracy to about $20\%$ when we only use $60\%$ of the training data in CIFAR10. This indicates that the GUE-trained poison generator can generalize to unseen data well.

\begin{figure}[t]
\centering
\includegraphics[width=0.9\columnwidth]{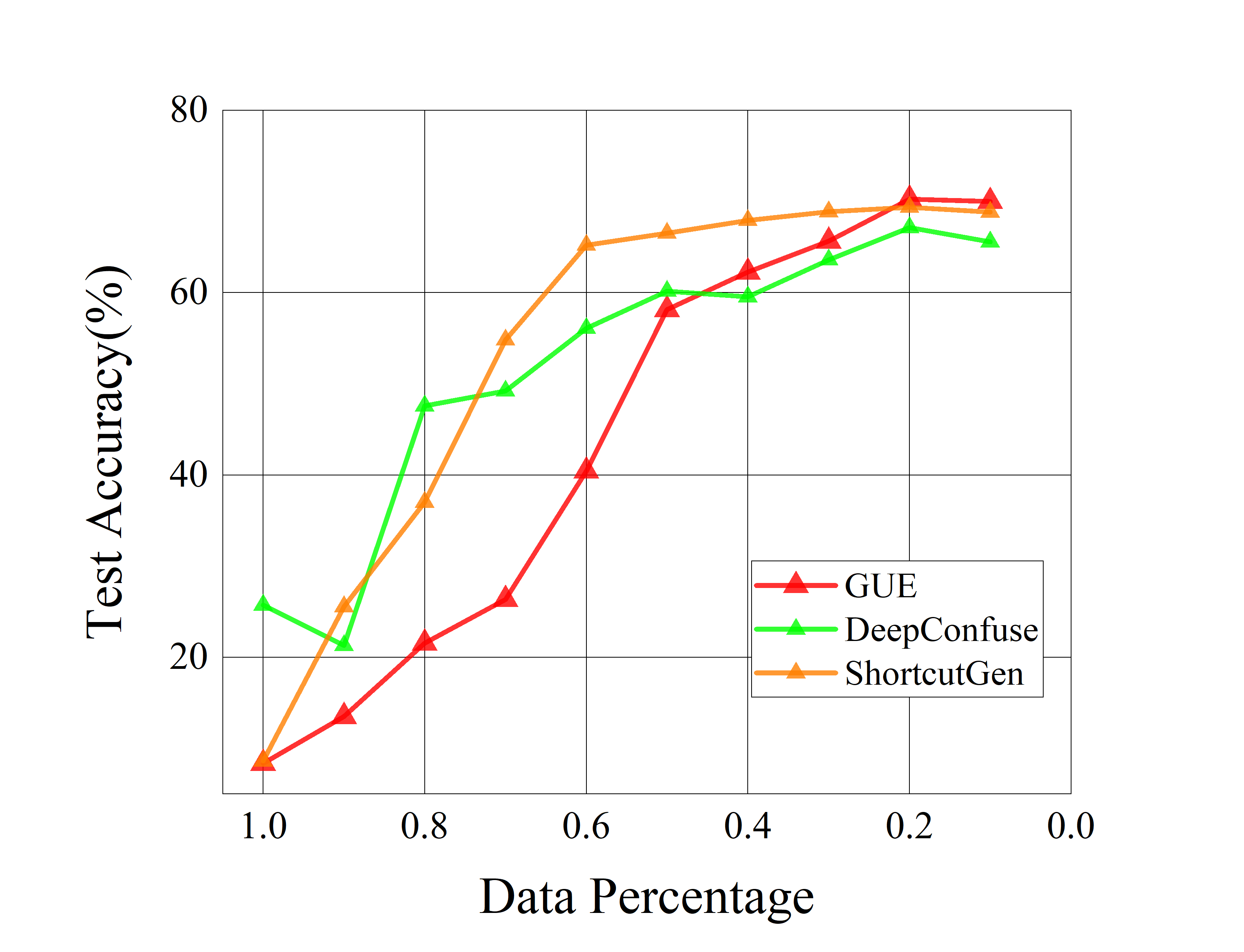} 
\caption{Test accuracy of ResNet-18 trained on poisoned CIFAR-100, where the poison generator is trained on different percentage of train data.}
\label{fig-part2}
\end{figure}


\paragraph{Transferability on different model architectures.} We generate a GUE with fixed classifier structure as ResNet-18.  For this reason, a natural question to ask is whether the poison generator trained with one model architecture is still effective when the victim classifier adopts a different architecture. Table \ref{table_models} shows that the
poisoning effects of our GUE optimized against a ResNet-18 classifier can be transferred to other model architectures. (To save space, we denote DeepConfuse and ShorcutGen as DC and SG.) Specifically, we can observe that the transferability of our GUE is more stable than that of DeepConfuse and ShortcutGen. Our GUE is capable of reducing the test accuracy to $12.97\% \sim 14.68\%$ across various model architectures.
\begin{table}[ht]
	\centering
	\begin{tabular}[c]{ccccc}
    	\toprule
		{Poison method} & {Clean} & {GUE} & {DC} &{SG}\\
		\midrule
		ResNet-18 & 92.36 & 13.25 &22.74 & 24.42\\
	    VGG16 & 91.18 & 13.72 & 25.35 & 12.32\\
        ResNet-50 &  92.14 & 12.97 & 20.56 & 17.35 \\
        DenseNet-121 & 92.10 & 13.71 & 21.44 & 16.59\\
        WRN-28-10 & 92.86 & 14.68 &29.02 & 25.09\\
		\bottomrule
	\end{tabular}
    \caption {Transferability of GUE from ResNet-18 to other model architectures on CIFAR-10.}
    \label{table_models}
\end{table}

\paragraph{Effects of defenses against GUE.}
We have demonstrated the efficacy of our GUE in the setting where the victim classifier only uses standard training. However, the victim classifier could employ several defenses proposed against poison attacks. Thus, we test the effectiveness of several popular defenses against our GUE.

Here, we firstly study whether data augmentations can mitigate our GUE. Following previous work, we test a diverse set of data augmentations, including Mixup \cite{zhang2017mixup}, Cutout \cite{devries2017cutout}, and Cutmix \cite{yun2019cutmix}.
Since adversarial training is widely considered as an effective defense against unlearnable examples attacks, we also test adversarial training with adversarial radius $\epsilon_d = 2/255$ and $4/255$. 

Table \ref{table_defences} shows that all data augmentation methods fail to mitigate our GUE. Adversarial training can counteract the poison effect, but our GUE is still effective when the adversarial radius is small $\epsilon_d=2/255$. It is more effective than other existing methods, such as EM, TAP, both of which achieve test accuracy exceeding $70\%$.

\begin{table}[ht]
	\centering
	\begin{tabular}[c]{cc}
    	\toprule
		{Defenses} & {Clean test accuracy} \\
		\midrule
            Baseline(Clean) &  92.36 \\
		  Mixup & 11.83 \\
            Cutout & 14.71\\
            Cutmix & 19.84 \\
            Adv training($\epsilon_d = 2/255$) & 22.55\\
            Adv training($\epsilon_d = 4/255$) & 76.96 \\
		\bottomrule
	\end{tabular}
    \caption {Evaluating GUE against different defenses.}
    \label{table_defences}
\end{table}

\subsection{Adversarial Training}
We find that our GUE cannot defend against adversarial training for a large adversarial radius in the preceding subsection. 
To solve this problem, we consider solving the game $\G_{at}$ in which the victim classifier also adopts adversarial training. And we denote the attacks as GUE-AT($\epsilon, \epsilon_d$) where $\epsilon$ is the poison radius and $\epsilon_d$ is the adversarial radius in the game $\G_{at}$.

We train 150 epochs to generate GUE-AT with algorithm \ref{alg:algorithm}, the optimizers are set as the standard training.
We set $\lambda=1$ in trade-off loss, the inner maximization (that is, generation of adversarial examples) of the adversarial training is solved by 10-step PGD with a step size of $\epsilon_d/4$.
For adversarial training evaluation experiments, all training settings are the same as the standard training except that the initial learning rate is changed to 0.1.

\paragraph{Different adversarial training perturbation radii.} We train models using different adversarial training perturbation radii on these unlearnable examples. The adversarial training radius $\epsilon_d$ ranges from $0/255$ to $4/255$. It is important to note that when $\epsilon_d=0$, the models are trained using the standard training method. Table \ref{table_gue_at} reports the accuracies of the trained models under different unlearnable example attacks. 

We can see that our GUE-AT outperforms the SOTA method REM over $7.48\% -9.71\%$ when adversarial radius $\epsilon_d=2/255$. For a larger adversarial radius, our GUE-AT is also effective. In total, these experiments show that our method can be applied to different adversarial training perturbation radii while keeping a significant poison effect.

\begin{table}[t]
	\centering
	\begin{tabular}[c]{cccc}
    	\toprule
		{Poison method} &{$\epsilon_d =0$}& {$\epsilon_d = 2/255$}  & {$\epsilon_d = 4/255$} \\
		\midrule  
        None(Clean) & 92.36 & 92.41 &89.82\\
        INF  & 88.78 & 80.91 & 77.96 \\
        \midrule  
		REM(8-2) & 21.14 & 29.75 & 74.20\\
        GUE-AT(8-2) & 16.11 & 22.27 & 60.11 \\
        \midrule  
        REM(8-4) & 26.15 & 30.90 & 44.91 \\
        GUE-AT(8-4) & 17.86 &  21.19 &52.79\\
		\bottomrule
	\end{tabular}
    \caption {Test accuracy of ResNet-18 adversarially trained with   radius $\epsilon_d = 0, 2/255, 4/255$. 
    INF is from \cite{2023inf}.}
    \label{table_gue_at}
\end{table}

\subsection{Ablation Studies}
In this subsection, we conduct experiments to provide an empirical understanding of the proposed surrogate loss. 
As discussed  when introducing the surrogate loss,
the cross-entropy loss has no upper bound and does not provide a criterion for convergence.
Therefore, we propose the surrogate loss $\L_{sur}$, which is upper bounded and equivalent to $\L_{ce}$. Empirically, we compute the game equilibrium with $L_a = \L_{ce}$ and $\L_a = \L_{sur}$, and observe the gradient of $\J_{a}$ and the poisoned loss $\J_c(w, \theta)$ in each batch of data during training. 

Figure \ref{fig:gradient} (a) shows that $||\nabla_{\theta} \J_a||_2$ with $\L_{ce}$  is approximately two orders of magnitude larger than with $\L_{sur}$, and diverges. A straightforward method to overcome the explosion of gradients is gradient clipping. Hence, we use gradient clipping to control the gradient norm $||\nabla_{\theta} \J_a||_2 = 10$.
However, in Figure \ref{fig:gradient}(b) it can be seen that, as a necessary condition for convergence, poisoned loss $\J_c(w, \theta)$ cannot converge when we use $\L_{ce}$ or $\L_{ce}$ with gradient clipping. Only our proposed loss has guarantee of convergence. This verifies the necessity of our proposed loss $\L_{sur}$ to compute the equilibrium.

\begin{figure}[t]
\centering
    \centering
    \includegraphics[width=0.8\columnwidth]{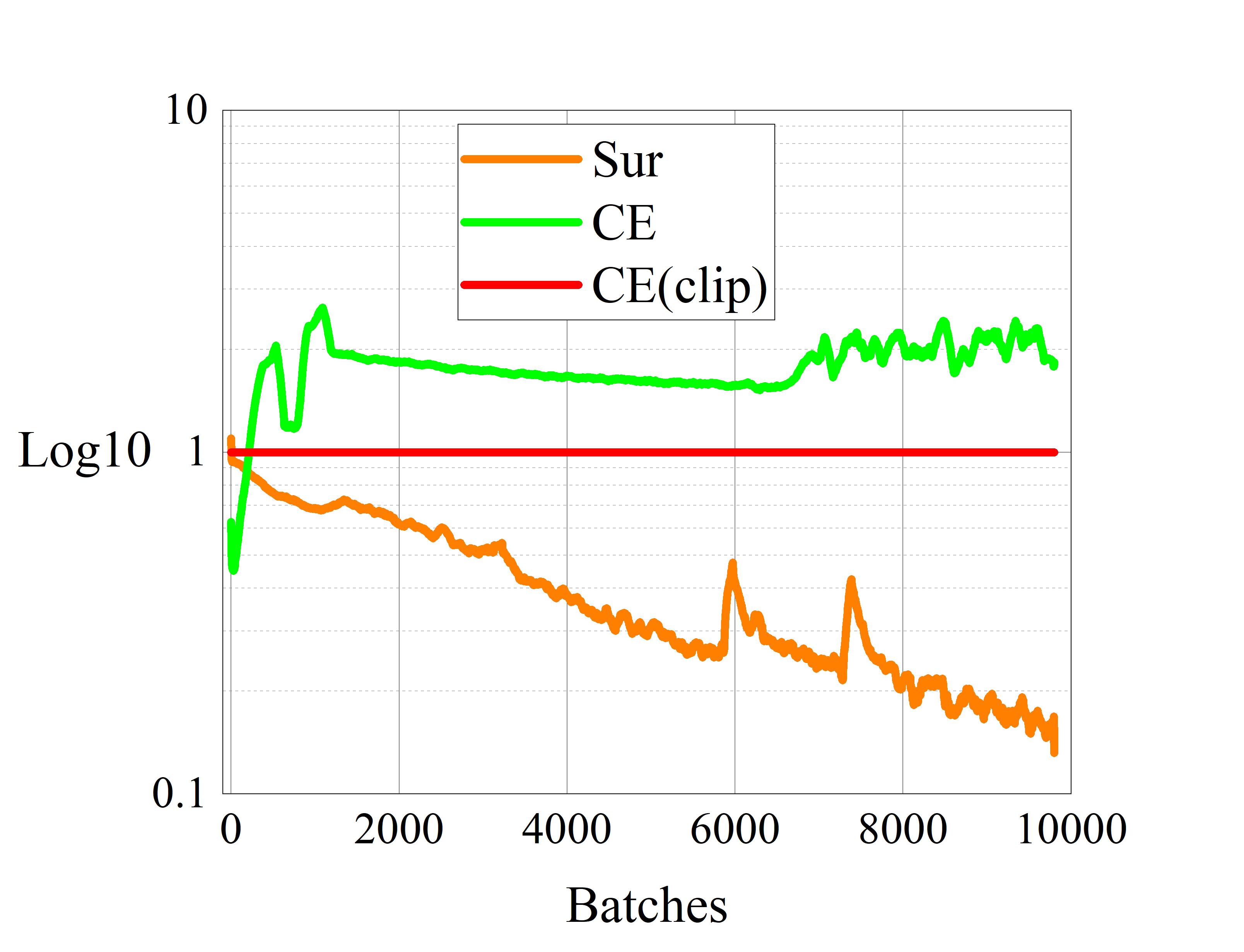}
\centerline{(a)  $||\nabla_{\theta} \J_a||_2$}
    \centering
    \includegraphics[width=0.8\columnwidth]{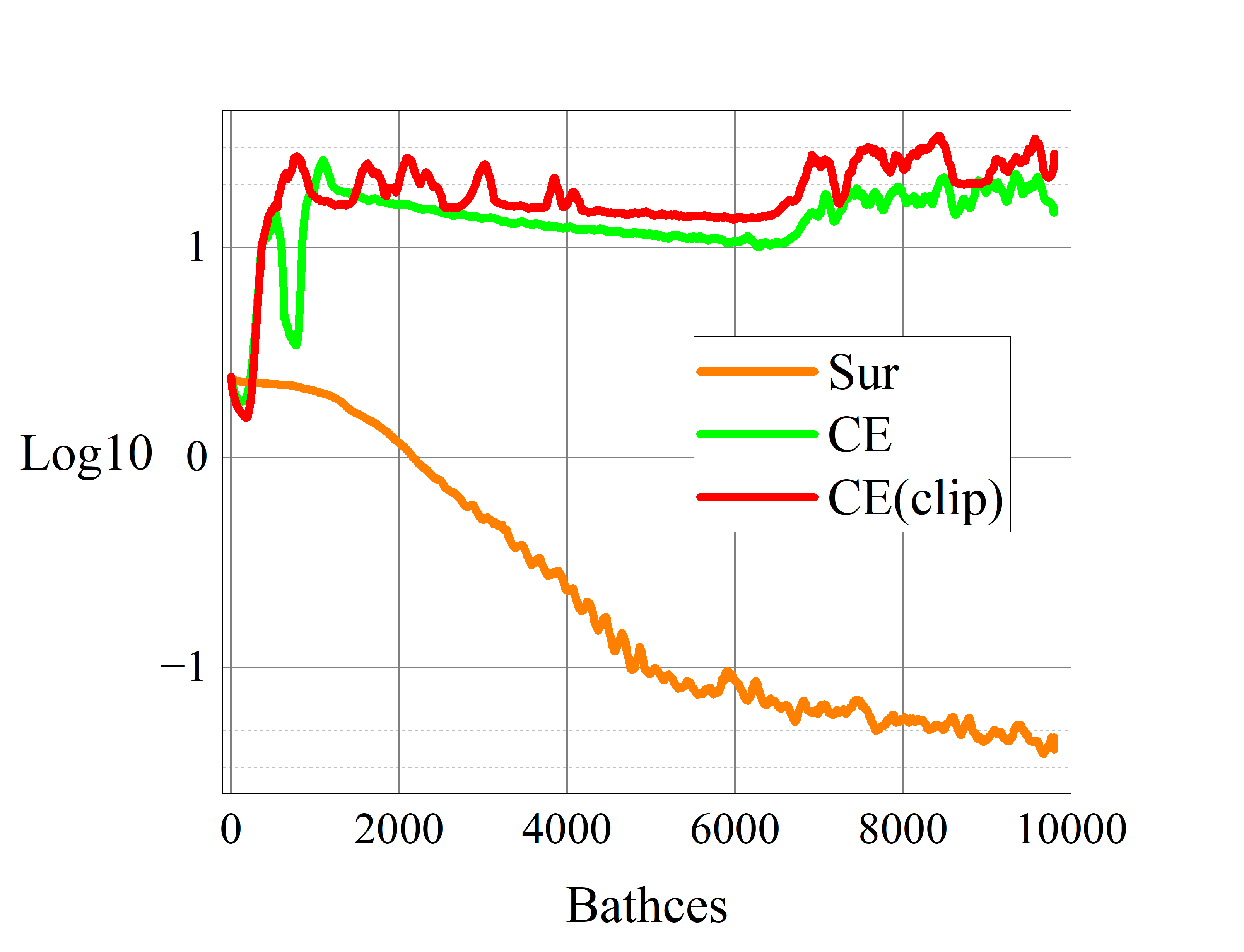}\\
    \centering{(b) Poisoned loss $\J_c(w, \theta)$}
\caption{The training curves of $||\nabla_{\theta} \J_a||_2$ and  $\J_c(w, \theta)$ when we use different loss function in $\J_{a}$ to compute the equilibrium: $\L_{ce}, \L_{sur}$ and $\L_{ce}$ with gradient clipping. We take $\log_{10}$ of all values for better visualization.
}
\label{fig:gradient}
\end{figure}

\section{Proofs}
In this section, we give  proofs of the theoretical results in the paper
and give a complete statement of Remark \ref{remark-acc}.

\subsection{Proof of Theorem \ref{thm:exist}}
\label{pf-th3}
We first introduce some lemmas about the semicontinuity of set-valued maps and then give the complete proof of the existence of the Stackelberg equilibrium of unlearnable examples game.

Similar to Lemma 3.2 in \cite{ATgame}, we can prove the following properties of payoff functions.
\begin{lemma}
\label{lemma:cont}
     The payoff functions $J_c(\A, \theta)$ and $J_a(\theta)$ are continuous.
\end{lemma}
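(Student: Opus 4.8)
The plan is to establish the two continuity claims separately, since $\J_c$ and the attacker's integrand $\J(\theta) = -\E_{(x,y)\sim\Dd}[\L_a(x,y;\theta)]$ (written $J_a(\theta)$ in the statement) have genuinely different structure: $\J_c$ is an empirical average over the finite set $\S$, whereas $\J(\theta)$ is a population expectation over $\Dd$. I would treat the finite case first, where everything is elementary, and then isolate the one analytic subtlety in the population case.

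First, for $\J_c(\A,\theta) = \E_{(x,y)\sim\S}[\L_c(x+\A(x,y),y;\theta)]$, I would exploit that $\S$ is finitely sampled, so this expectation is the finite average $\frac{1}{|\S|}\sum_{(x,y)\in\S}\L_c(x+\A(x,y),y;\theta)$. Identifying $\A$ with the finite-dimensional vector of its values $(\A(x,y))_{(x,y)\in\S}\in\Gamma$, the map $(\A,\theta)\mapsto(x+\A(x,y),\theta)$ is continuous, and composing it with $\L_c$, which is continuous in its input and in $\theta$ by Assumption \ref{ass:l-lip}, each summand is jointly continuous in $(\A,\theta)$. A finite average of continuous functions is continuous, giving joint continuity of $\J_c$.

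Second, for $\J(\theta) = -\E_{(x,y)\sim\Dd}[\L_a(x,y;\theta)]$, the point is to interchange the limit in $\theta$ with the expectation over $\Dd$. I would argue via compactness: $\Theta$ is compact by Assumption \ref{ass:comp}, the label set $\Y$ is finite, and the data space $\X$ is bounded (normalized inputs), so $\X\times\Y\times\Theta$ is compact. By Assumption \ref{ass:l-lip}, $\L_a$ is continuous on this compact set, hence bounded and uniformly continuous there. For any $\theta_k\to\theta$ in $\Theta$, uniform continuity yields $\sup_{(x,y)}|\L_a(x,y;\theta_k)-\L_a(x,y;\theta)|\to 0$; bounding the expectation by this supremum gives $|\J(\theta_k)-\J(\theta)|\to 0$, so $\J$ is continuous in $\theta$. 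Equivalently, one may invoke dominated convergence using the uniform bound on $|\L_a|$.

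The main obstacle is precisely this exchange of limit and population expectation for $\J(\theta)$: without a uniform integrable envelope for $\L_a$ over $\theta\in\Theta$, continuity can fail. The argument is clean only because compactness of $\X\times\Y\times\Theta$ simultaneously supplies uniform continuity and a uniform bound; if the input space were unbounded, one would instead need to posit an explicit integrable dominating function, which is the assumption I would flag as implicitly relying on the standing boundedness of the (normalized) data. Everything else reduces to continuity of finite compositions and sums.
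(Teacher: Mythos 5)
Your proof is correct, but it is worth noting that the paper itself gives no argument for this lemma at all: it simply asserts the statement ``similar to Lemma 3.2 in \cite{ATgame}'' and moves on, so your write-up supplies reasoning the paper leaves entirely to a citation. Your decomposition is the right one. The empirical payoff $\J_c$ really is a finite average over $\S$, and once $\A$ is identified with its vector of values in $\Gamma$ (exactly the identification the paper makes in the proof of Theorem \ref{thm:exist}), joint continuity in $(\A,\theta)$ follows from Assumption \ref{ass:l-lip} by composition and finite summation; this joint continuity is what Lemma \ref{set-cont} later needs, so proving it jointly rather than separately in each variable is the correct target. For $J(\theta)=-\E_{(x,y)\sim\Dd}[\L_a(x,y;\theta)]$ you correctly isolate the only nontrivial step, the interchange of the limit in $\theta$ with the expectation over $\Dd$, and you are right to flag that Assumptions \ref{ass:comp} and \ref{ass:l-lip} alone do not justify it: the paper only states $\X\subset\R^n$, so one must additionally invoke boundedness (compactness) of the data domain, which is implicit for normalized image data, to obtain either the uniform-continuity bound $\sup_{(x,y)}|\L_a(x,y;\theta_k)-\L_a(x,y;\theta)|\to 0$ or a constant dominating function for dominated convergence. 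Either route closes the argument; making this hidden hypothesis explicit is a genuine improvement over the paper's treatment. The only cosmetic remark is that the downstream use in Theorem \ref{thm:exist} only requires lower semicontinuity of $J(\theta)$, so you prove slightly more than is strictly needed, which does no harm.
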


\begin{lemma}\label{inf-cont}
Let $X, Y$ be two Hausdorff spaces and $X$ compact, 
$F:X\times Y\rightarrow \mathbb{R}$ continuous. 
Then $\inf\limits_{x\in X}F(x, y)$ is continuous on $Y$.
\begin{proof}
    It is equivalent to prove that $\forall \epsilon>0, \exists \delta>0$, for any $||y_1-y_2||\le\delta$, we have $|\inf_{x\in X} F(x, y_1) - \inf_{x\in X} F(x, y_2)|\le\epsilon$.
    Since $F(x, y)$ is continuous and $X$ is compact, there exists an $x^*$ for any $y$ such that $F(x^*, y) = \inf_{x\in X} F(x, y)$, so there exist $x_1^*, x_2^*$ correlated with $y_1, y_2$, respectively.

Since $F(x, y)$ is continuous, $\forall \epsilon>0, \exists \delta(x)>0$, such that when $||y_1-y_2||\le\delta(x)$, we have $|F(x, y_1) - F(x, y_2)|\le\epsilon$.
Hence $\forall \epsilon>0$, there exits a $\delta(x_2^*)$, such that
    \[\inf_{x\in X} F(x, y_1) - \inf_{x\in X} F(x, y_2)\le F(x_2^*, y_1) - F(x_2^*, y_2)\le\epsilon\]
    and there exits a $\delta(x_1^*)$ such that
    \begin{equation*}
        \begin{aligned}
            & \inf_{x\in X} F(x, y_1) - \inf_{x\in X} F(x, y_2) \\
             & \ge F(x_1^*, y_1) - F(x_1^*, y_2) \ge  -\epsilon.
        \end{aligned}
    \end{equation*}
Let $\delta = \min\{\delta(x_1^*), \delta(x_2^*)\}$. Then $\forall \epsilon>0, \exists \delta>0$, such that for any $||y_1-y_2||\le\delta$, $|\inf_{x\in X} F(x, y_1) - \inf_{x\in X} F(x, y_2)|\le\epsilon$ holds, that is, $\inf\limits_{x\in X}F(x, y)$ is continuous on $Y$.
\end{proof}
\end{lemma}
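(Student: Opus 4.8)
The plan is to show that the value function $m(y) \coloneqq \inf_{x \in X} F(x, y)$ is both upper and lower semicontinuous, since the two together give continuity. The first observation is that the infimum is in fact attained: for each fixed $y$ the slice $F(\cdot, y)$ is continuous on the compact space $X$, so by the extreme value theorem there is a minimizer $x_y \in X$ with $F(x_y, y) = m(y)$. This lets me replace $\inf$ by $\min$ throughout and, more importantly, produce concrete witnesses with which to compare values at nearby points.

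For upper semicontinuity I would fix $y_0$ together with a minimizer $x_0 = x_{y_0}$, and note that $m(y) \le F(x_0, y)$ for every $y$, since $x_0$ is merely a feasible (not necessarily optimal) choice at $y$. Continuity of the single slice $F(x_0, \cdot)$ then forces $\limsup_{y \to y_0} m(y) \le F(x_0, y_0) = m(y_0)$. This is the easy half and uses only continuity in the $Y$ variable at one fixed point $x_0$.

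The main obstacle is lower semicontinuity, and this is where compactness of $X$ and \emph{joint} continuity of $F$ are both essential. I would argue by contradiction: if lower semicontinuity failed at $y_0$, there would be a net $y_\alpha \to y_0$ with $m(y_\alpha) \to L < m(y_0)$. Choosing minimizers $x_\alpha = x_{y_\alpha}$ and invoking compactness of $X$, I can pass to a convergent subnet $x_\beta \to x^*$. Joint continuity of $F$ then yields $F(x^*, y_0) = \lim_\beta F(x_\beta, y_\beta) = \lim_\beta m(y_\beta) = L$, which contradicts $F(x^*, y_0) \ge \inf_{x} F(x, y_0) = m(y_0) > L$. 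The care required here is that in a general Hausdorff space one must work with nets and subnets rather than sequences, because compactness guarantees convergent subnets but not sequential compactness; if $Y$ is assumed metrizable, as it is in every application in this paper, the argument goes through verbatim with ordinary sequences.

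Combining the two halves shows $m$ is continuous on $Y$. An alternative, more quantitative route that avoids nets and is closer to a direct $\varepsilon$--$\delta$ estimate is to bound $m(y_1) - m(y_2)$ simultaneously from above, via the minimizer $x_{y_2}$ so that $m(y_1) - m(y_2) \le F(x_{y_2}, y_1) - F(x_{y_2}, y_2)$, and from below, via the minimizer $x_{y_1}$ so that $m(y_1) - m(y_2) \ge F(x_{y_1}, y_1) - F(x_{y_1}, y_2)$, and then make both right-hand sides small using continuity of $F$ at these two witness points. I expect the lower-semicontinuity/compactness step to be the only genuinely nontrivial part; everything else is bookkeeping.
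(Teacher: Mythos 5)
Your proposal is correct, and your primary route is genuinely different from the paper's. The paper proves the lemma by the direct two-sided estimate: it takes minimizers $x_1^*$ of $F(\cdot,y_1)$ and $x_2^*$ of $F(\cdot,y_2)$ and sandwiches $\inf_x F(x,y_1)-\inf_x F(x,y_2)$ between $F(x_1^*,y_1)-F(x_1^*,y_2)$ and $F(x_2^*,y_1)-F(x_2^*,y_2)$, then invokes continuity of $F$ at these two witness points --- exactly the ``alternative, more quantitative route'' you sketch in your final paragraph. Your main argument instead decomposes continuity into upper semicontinuity (immediate, by testing the infimum against a fixed minimizer at $y_0$) and lower semicontinuity (by contradiction, extracting a convergent subnet of minimizers from the compact space $X$ and using joint continuity). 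Both are viable, but your decomposition is the more robust of the two: in the paper's direct estimate the threshold $\delta(x_2^*)$ depends on $x_2^*$, which in turn depends on the very point $y_2$ being varied, so the final choice $\delta=\min\{\delta(x_1^*),\delta(x_2^*)\}$ is not made before $y_2$ is fixed; repairing this requires a uniformity-in-$x$ (equicontinuity) step, e.g.\ uniform continuity of $F$ on $X\times K$ for a compact neighborhood $K$ of $y_1$, which the paper leaves implicit. Your subnet argument for lower semicontinuity sidesteps that issue entirely and, as you correctly flag, is the version that remains valid verbatim for non-metrizable Hausdorff $Y$ (where the paper's use of $\|y_1-y_2\|$ does not even parse); its only cost is the appeal to nets, which collapses to ordinary sequences in every application in this paper.
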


\begin{lemma}\label{lm:set-map}
    \cite{Aubin1984AppliedNA} 
Let $X, Y$ be two Hausdorff spaces, $G$ a set-valued map from $Y$ to $X$, and $W$ a real-valued function defined on $X$. Suppose that $W$ is lower semicontinuous, $G$ is lower semicontinuous on $Y$. Then the marginal function $V(y)=\sup\limits_{x\in G(y)}W(x)$ is lower semicontinuous on $Y$.
\end{lemma}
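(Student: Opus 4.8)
The plan is to work directly from the topological characterization of lower semicontinuity rather than with sequences, since $X$ and $Y$ are only assumed to be Hausdorff. Recall that $V$ is lower semicontinuous on $Y$ precisely when, for every real $\alpha$, the superlevel set $\{y \in Y : V(y) > \alpha\}$ is open. So I would fix an arbitrary $\alpha \in \mathbb{R}$ together with an arbitrary $y_0$ satisfying $V(y_0) > \alpha$, and aim to produce a neighborhood $N$ of $y_0$ contained in this superlevel set; since $y_0$ is an arbitrary point of the set, this exhibits the set as open and hence proves the claim.

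First I would unpack the strict inequality $V(y_0) = \sup_{x \in G(y_0)} W(x) > \alpha$: by the definition of supremum there exists a witness $x_0 \in G(y_0)$ with $W(x_0) > \alpha$. Note that I do \emph{not} need the supremum to be attained — a strict-inequality witness is enough, and this is exactly why lower (rather than upper) semicontinuity of $W$ is the correct hypothesis. Next I would invoke lower semicontinuity of $W$ in its open-set form, namely that $U \coloneqq \{x \in X : W(x) > \alpha\}$ is open; by construction $x_0 \in U \cap G(y_0)$, so $U$ is an open set that meets $G(y_0)$.

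The heart of the argument is then the definition of lower semicontinuity of the set-valued map $G$: for an open set $U$ with $G(y_0) \cap U \neq \emptyset$, there is a neighborhood $N$ of $y_0$ such that $G(y) \cap U \neq \emptyset$ for every $y \in N$. Applying this to the $U$ just constructed, for each $y \in N$ I may choose some $x_y \in G(y) \cap U$, whence $W(x_y) > \alpha$ and therefore $V(y) = \sup_{x \in G(y)} W(x) \ge W(x_y) > \alpha$. Thus $N \subseteq \{y : V(y) > \alpha\}$, which is what was needed.

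I expect the only genuine subtlety — a matter of care rather than of difficulty — to be pinning down the correct convention for lower semicontinuity of the set-valued map, since several inequivalent notions (lower versus upper hemicontinuity) circulate in the literature. The proof above relies on the \emph{lower} version, that preimages of open sets are ``robust'' under $G$, which is precisely the hypothesis matching the formulation cited from \cite{Aubin1984AppliedNA} and used via Lemma~3.2 of \cite{ATgame}. Once the two witnesses $x_0$ and $x_y$ are chosen, everything else is a one-line chain of implications.
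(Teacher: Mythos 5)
Your argument is correct. Note that the paper itself gives no proof of this lemma --- it is quoted directly from \cite{Aubin1984AppliedNA} --- so there is nothing to compare against; your write-up is essentially the standard textbook argument for marginal functions of lower semicontinuous set-valued maps, and every step checks out: the strict-inequality witness $x_0\in G(y_0)$ with $W(x_0)>\alpha$, the openness of $U=\{x: W(x)>\alpha\}$ from lower semicontinuity of $W$, and the neighborhood $N$ of $y_0$ with $G(y)\cap U\neq\emptyset$ from lower semicontinuity of $G$. The one caveat you already flag is the right one: the neighborhood (open-set) definition of lower semicontinuity for $G$ that your proof uses must be the one matched by Lemma~\ref{set-cont}, which verifies lower semicontinuity of $\text{BR}_\eta(\cdot)$ in the sequential sense; these agree on the metric (hence first-countable) spaces where the lemma is actually applied, so the chain of lemmas is sound.
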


\begin{lemma}\label{set-cont}
    Let $X, Y$ be two compact subsets in a metric space, $F:X\times Y\rightarrow \mathbb{R}$ a continuous function, 
    and $G(y)= \{x\in X:F(x, y)<\inf \limits_{z\in X}F(z, y) + \eta\}$ where $\eta>0$ is a constant. Then $G(y):Y\rightarrow X$ is lower semicontinuous.
\end{lemma}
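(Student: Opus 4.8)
The plan is to verify the sequential characterization of lower semicontinuity for set-valued maps between metric spaces: it suffices to show that for every $y_0 \in Y$, every $x_0 \in G(y_0)$, and every sequence $y_n \to y_0$ in $Y$, there is a sequence $x_n \in G(y_n)$ (eventually) with $x_n \to x_0$. This is equivalent to the open-set formulation used in \cite{Aubin1984AppliedNA} because $X$ and $Y$ are metric.

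First I would set $m(y) := \inf_{z \in X} F(z, y)$ and invoke Lemma \ref{inf-cont}, with $X$ compact and $F$ continuous, to conclude that $m$ is continuous on $Y$. This is the only place the compactness of $X$ enters, and it is what makes the threshold $m(y) + \eta$ appearing in the definition of $G(y)$ vary continuously with $y$. I would also record that $G(y)$ is always nonempty, since $\eta > 0$ and the infimum $m(y)$ can be approached.

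The key step exploits the \emph{strict} inequality in the definition of $G$. Fix $y_0$, $x_0 \in G(y_0)$, and $y_n \to y_0$. Since $x_0 \in G(y_0)$, the slack $\delta := m(y_0) + \eta - F(x_0, y_0)$ is strictly positive. By continuity of $F(x_0, \cdot)$ and of $m$, we have $F(x_0, y_n) \to F(x_0, y_0)$ and $m(y_n) \to m(y_0)$, hence $F(x_0, y_n) - m(y_n) - \eta \to -\delta < 0$. Therefore there is an $N$ such that $F(x_0, y_n) < m(y_n) + \eta$ for all $n \ge N$, i.e. $x_0 \in G(y_n)$ for all $n \ge N$. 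Taking $x_n := x_0$ for $n \ge N$ and any point of $G(y_n)$ otherwise gives a sequence $x_n \in G(y_n)$ converging to $x_0$, which establishes lower semicontinuity at $y_0$; since $y_0$ was arbitrary, $G$ is lower semicontinuous on $Y$.

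I expect no serious obstacle: the whole argument hinges on the strict inequality (a $<$ rather than a $\le$) furnishing the slack $\delta$, which the constant sequence $x_n = x_0$ then consumes via continuity. The one point requiring care is the continuity of $m(y)$, which is precisely the content of Lemma \ref{inf-cont}; had the definition of $G$ used a non-strict inequality, this constant-sequence construction would fail and one would instead obtain upper, rather than lower, semicontinuity.
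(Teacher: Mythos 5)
Your proposal is correct and follows essentially the same route as the paper's proof: both use the sequential characterization of lower semicontinuity, both invoke Lemma \ref{inf-cont} to get continuity of $\inf_{z\in X}F(z,y)$, and both exploit the strict inequality plus continuity of $F$ to conclude $x_0\in G(y_n)$ for large $n$. Your explicit choice of the constant sequence $x_n=x_0$ is in fact a slightly cleaner rendering of the paper's (somewhat vaguely stated) construction of a sequence converging to $x^0$.
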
   
\begin{proof}
The fact that $G(y)$ is lower semicontinuous is equivalent to proving for any $y\in Y$, any sequence $\{y^m\}_{m=1}^\infty$ convergent to $y$, and $\forall x^0\in G(y)$, there exists a sequence $\{x^m\}_{m=1}^\infty$ converging to $x^0$ such that $x^m\in G(y^m)$ holds for sufficiently large $m$.

    Since $x^0\in G(y)$, we have $F(x^0, y)<\inf \limits_{x}F(x, y)+\eta$, and $X$ is compact, so there exists a sequence $\{x^m\}_{m=1}^\infty$ that converges to $x^0$. Because $x^m\rightarrow x^0$ and $y^m \rightarrow y$, and $F(x, y)$ are continuous, we have $\lim\limits_{m\rightarrow\infty}F(x^m, y^m)=F(x^0, y)$. According to the Lemma \ref{inf-cont}, since $X$ is compact, $\inf\limits_{x\in X} F(x, y)$ is continuous on $Y$. Then
    \begin{align*}
         \lim\limits_{m\rightarrow\infty}F(x^m, y^m) &=F(x^0, y)<\inf \limits_{x}F(x, y)+\eta \\&= \lim\limits_{m\rightarrow\infty}\inf\limits_{x\in X} F(x, y^m) +\eta.
    \end{align*}
Then for a sufficiently large $m$, we have $F(x^m, y^m)<\inf\limits_{x\in X} F(x, y^m) +\eta$, i.e. $x^m\in G(y^m)$ for a sufficiently large $m$. Therefore, $G(y)$ is lower semi-continuous on $y$.
\end{proof}

Now we can prove the main theorem:
\begin{theorem}
    (Restate of Theorem \ref{thm:exist}) The unlearnable example game $\mathcal{G}$ has a Stackelberg equilibrium.
\end{theorem}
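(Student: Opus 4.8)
The plan is to recast the existence of a Stackelberg equilibrium as the minimization of the marginal (value) function $V(\A)=\sup_{\theta\in\text{BR}_{\eta}(\A)}\J(\theta)$ over the perturbation space $\Gamma$, and then invoke the classical fact that a lower semicontinuous function on a compact set attains its infimum. First I would pin down the two strategy spaces and argue compactness: $\Theta$ is compact by Assumption \ref{ass:comp}, while $\Gamma$ is compact because $\S$ is a \emph{finite} dataset, so a strategy $\A$ is determined by finitely many vectors, each constrained to the closed box $\{v:\|v\|_\infty\le\epsilon\}$; hence $\Gamma$ is a finite product of compact boxes.

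Next I would record the needed continuity. Since each $\L_c(\cdot,\cdot;\cdot)$ is continuous in $x$ and $\theta$ by Assumption \ref{ass:l-lip}, and $\J_c(\A,\theta)=\E_{(x,y)\sim\S}\big[\L_c(x+\A(x,y),y;\theta)\big]$ is a finite average of such terms, $\J_c$ is jointly continuous in $(\A,\theta)$; similarly $\J(\theta)=-\E_{(x,y)\sim\Dd}\big[\L_a(x,y;\theta)\big]$ is continuous in $\theta$. This is the content of Lemma \ref{lemma:cont}.

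The heart of the argument is the lower semicontinuity of $V$, which I would obtain by chaining two results. Applying Lemma \ref{set-cont} with $X=\Theta$, $Y=\Gamma$, and $F=\J_c$ shows that the $\eta$-approximate best-response map $\A\mapsto\text{BR}_{\eta}(\A)=\{\theta:\J_c(\A,\theta)<\inf_{\theta'}\J_c(\A,\theta')+\eta\}$ is lower semicontinuous; the continuity of $\inf_{\theta}\J_c(\A,\theta)$ in $\A$ that this requires is furnished by Lemma \ref{inf-cont} (using compactness of $\Theta$). Then, because $\J$ is continuous and hence lower semicontinuous, Lemma \ref{lm:set-map} with $W=\J$ and $G=\text{BR}_{\eta}$ gives that $V(\A)=\sup_{\theta\in\text{BR}_{\eta}(\A)}\J(\theta)$ is lower semicontinuous on $\Gamma$. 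A lower semicontinuous function on the compact set $\Gamma$ attains its minimum, so there exists $\A^*$ with $V(\A^*)=\inf_{\A\in\Gamma}V(\A)$; since $\text{BR}_{\eta}(\A^*)$ is non-empty (an $\eta$-approximate minimizer of the continuous $\J_c(\A^*,\cdot)$ over compact $\Theta$ always exists), I pick any $\theta^*\in\text{BR}_{\eta}(\A^*)$, and $(\A^*,\theta^*)$ is the claimed equilibrium.

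I expect the main obstacle to be the lower semicontinuity of the best-response correspondence. With \emph{exact} best responses (the $\arginf$), the correspondence is generally only closed or upper semicontinuous and can fail lower semicontinuity, which would break the Aubin step in Lemma \ref{lm:set-map}; it is precisely the strict inequality in the $\eta$-approximate definition that restores lower semicontinuity. Getting this step right—together with the continuity of $\inf_{\theta}\J_c$ on which it rests—is the crux, whereas the remaining pieces (compactness of $\Gamma$, continuity of the averaged losses, and attainment of the minimum) are routine.
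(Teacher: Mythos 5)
Your proposal is correct and follows essentially the same route as the paper's proof: compactness of $\Theta$ and $\Gamma$, continuity of the payoffs, lower semicontinuity of the $\eta$-approximate best-response map via Lemmas \ref{inf-cont} and \ref{set-cont}, lower semicontinuity of the marginal function via Lemma \ref{lm:set-map}, and attainment of the minimum on the compact set $\Gamma$. Your added remarks on why $\Gamma$ is compact and why the strict-inequality $\eta$-approximate definition is needed for lower semicontinuity are accurate and consistent with the paper's argument.
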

\begin{proof}
By Lemma \ref{lemma:cont}, we have $J_c(\A, \theta )$ is continuous, and by Assumption \ref{ass:comp}, $\Theta$ and $\Gamma$ are compact. Then, for any $\A$, $R_{\eta}(\A)$ is nonempty. Let $J(\theta) \coloneqq - \E_{(x,y)\sim \Dd} \big[\L_a(x, y; \theta)\big]$. 
Then the existence of a Stackelberg equilibrium is equivalent to $\exists \A^*\in \Gamma$ such that:
\[\sup_{\theta\in R_{\eta}(\A^*)}J(\theta)= \inf_{\A} \sup_{\theta\in R_{\eta}(\A)}J(\theta). \]
By Lemma \ref{set-cont}, we have that $R_{\eta}(\cdot)$ is lower semicontinuous. 
And since $J(\theta)$ is lower semi-continuous on $\Theta$, by Lemma \ref{lm:set-map}, $V(\A)= \sup\limits_{\theta\in R_{\eta}(\A)} J(\theta)$ is lower semicontinuous. Then $V(\A)$ can reach the minimum as $\Gamma$ is compact. Then $\exists \A^*$ such that
\[V(\A^*)= \inf_{\A\in \Gamma} V(\A)= \sup_{\theta\in R_{\eta}(\A^*)}J(\theta). \]
Since $R_{\eta}(\A)$ is a nonempty subset,  there exists a $\theta^*\in R_{\eta}(\A^*)$.
Thus, we have proved that there exist $\A^*\in \Gamma$ and $\theta^*\in R_{\eta}(\A^*)$ such that $(\A^*, \theta^*)$ is a Stacklberg equilibrium.
\end{proof}

\subsection{Proof of Lemma \ref{ln-ad11}}
\begin{lemma} (Restatement of Lemma \ref{ln-ad11})
      The adversarial loss function $ L_{adv}(x, y; f_\theta)=\max\limits_{||\mu||_\infty \le \epsilon_d} L_{ce}(f_\theta(x+\mu), y)$ is continuous.
  \begin{proof}
We show that $L_{adv}(x, y; f_\theta)$ is continuous at $\theta$, and the continuity for $x$ is similar. It is clear that $L_{ce}(f_\theta(x+\mu), y)$ is continuous on $\theta$. Therefore, for any $\gamma>0$, there exists a $\Delta(x) > 0$ (depending on $x$), such that for any $||\theta_1 - \theta_2||\le \Delta(x)$ we have $|L_{ce}(x, \theta_1) -L_{ce}(x, \theta_2)|\le \gamma$.

Let $L_{adv}(x_1, \theta_1)= \max\limits_{||\mu||_\infty\le \epsilon_d} L_{ce}(f_{\theta_1}(x_1+\mu), y)$ and $L_{adv}(x_2, \theta_2)= \max\limits_{||\mu||_\infty\le \epsilon_d} L_{ce}(f_{\theta_2}(x_2+\mu), y)$. Then for $||\theta_1 - \theta_2||\le \Delta(x_2)$:
        \begin{equation*}
            \begin{aligned}
                & L_{ce}(x_1, \theta_1) - L_{ce}(x_2, \theta_2)  \\ &\ge L_{ce}(x_2, \theta_1) - L_{ce}(x_2, \theta_2)\ge -\gamma
            \end{aligned}
        \end{equation*}
        and for $||\theta_1 - \theta_2||\le \Delta(x_1)$:
       \begin{equation*}
            \begin{aligned}
                & L_{ce}(x_1, \theta_1) - L_{ce}(x_2, \theta_2)  \\ &\le L_{ce}(x_1, \theta_1) - L_{ce}(x_1, \theta_2)\le \gamma.
            \end{aligned}
        \end{equation*}
        Thus, for any $\gamma>0$, there exists a constant $\Delta=\min\{\Delta(x_1), \Delta(x_2)\} > 0$, such that for any $||\theta_1 - \theta_2||\le \Delta$ we have $|\max\limits_{x'}L_{ce}(x', \theta_1) - \max\limits_{x'}L_{ce}(x', \theta_2)|\le \gamma$, which means $L_{adv}(x, y; f_\theta)$ is continuous at $\theta$.
    \end{proof}
\end{lemma}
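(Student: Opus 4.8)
The plan is to reduce each loss to the form ``a maximum, over a compact perturbation ball, of a jointly continuous function, possibly added to another continuous term,'' and then invoke the marginal-function continuity already established in Lemma~\ref{inf-cont}. That lemma is stated for $\inf$, but applying it to $-F$ immediately yields that $\sup_{x\in X} F(x,y)$ is continuous in $y$ whenever $X$ is compact and $F$ is jointly continuous; this is essentially the only tool I expect to need.

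First I would handle $\L_{adv}$. Writing $\L_{adv}(x,y;\theta)=\max_{\|\mu\|_\infty\le\epsilon_d}\L_{ce}(x+\mu,y;\theta)$, observe that the perturbation set $B=\{\mu:\|\mu\|_\infty\le\epsilon_d\}$ is compact, and that the map $(\mu,x,\theta)\mapsto\L_{ce}(x+\mu,y;\theta)$ is jointly continuous: it is the composition of the continuous shift $(\mu,x)\mapsto x+\mu$, the continuous network $f_\theta$ (continuous in both its input and $\theta$), and cross-entropy, whose softmax probabilities are strictly positive so that the logarithm is continuous. Applying the $\sup$-version of Lemma~\ref{inf-cont}, maximizing over $\mu\in B$, then gives continuity of $\L_{adv}$ in $(x,\theta)$.

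Next I would treat $\L_{tra}$. It is a sum of $\L_{ce}(x,y;\theta)$, which is continuous by Assumption~\ref{ass:l-lip}, and the scaled term $\tfrac1\lambda\max_{\|\mu\|_\infty\le\epsilon_d}\mathrm{KL}\big(f_\theta(x)\,\|\,f_\theta(x+\mu)\big)$. The same compactness argument applies once I know the inner map $(\mu,x,\theta)\mapsto\mathrm{KL}(f_\theta(x)\|f_\theta(x+\mu))$ is jointly continuous; continuity is then preserved under scaling by $\tfrac1\lambda$ and under adding the continuous $\L_{ce}$ term.

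The main obstacle will be verifying joint continuity of the KL term, since $\mathrm{KL}(p\|q)=\sum_i p_i\log(p_i/q_i)$ is only continuous where the second argument stays bounded away from $0$. I would resolve this by using that $f_\theta$ is a softmax output, so every coordinate $[f_\theta(\cdot)]_i$ is strictly positive; over the compact domain determined by $\Theta$ and the bounded inputs, these coordinates are uniformly bounded below by a positive constant, making each $\log(p_i/q_i)$ --- and hence the finite sum --- continuous. With this in hand, the $\sup$-form of Lemma~\ref{inf-cont} finishes the $\L_{tra}$ case exactly as for $\L_{adv}$, completing the proof.
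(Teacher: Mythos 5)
Your proposal is correct, and it reaches the conclusion by a route that is mathematically equivalent to, but organized differently from, the paper's. The paper proves Lemma~\ref{ln-ad11} with a bespoke $\epsilon$--$\delta$ estimate: it compares the two maxima by evaluating each at the other's maximizer, which is structurally the same argument already used to prove Lemma~\ref{inf-cont}, just repeated for $\max$ in place of $\inf$. You instead apply Lemma~\ref{inf-cont} to $-F$ to get, once and for all, that $\sup_{\mu\in B}F(\mu,\cdot)$ is continuous whenever $B$ is compact and $F$ is jointly continuous, so the only remaining work is to verify joint continuity of the inner map $(\mu,x,\theta)\mapsto\L_{ce}(x+\mu,y;\theta)$. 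This avoids duplicating the estimate and makes explicit the reliance on \emph{joint} rather than separate continuity, a point the paper's proof leaves implicit (its $\Delta(x)$ is chosen for a fixed $x$ and then silently reused at the two maximizers). Your treatment of $\L_{tra}$ is also more complete than the paper's: the paper dismisses the TRADES case with ``similarly,'' whereas you isolate the one genuine issue --- that $\mathrm{KL}(p\,\|\,q)$ is singular where $q$ vanishes --- and resolve it by noting that softmax outputs are strictly positive, hence bounded below on the relevant compact sets (continuity being local, local boundedness of the inputs suffices). The net effect is the same maximum-theorem idea, but your version is less redundant and fills a gap the paper waves away.
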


Similarly, we can prove that the trade-off adversarial loss is continuous:
\begin{lemma}
       The TRADES Loss function $ L_{tra}(x, y; f_\theta)= L_{ce}(f_\theta(x), y) + \frac{1}{\lambda}\max\limits_{||\mu||_\infty\le \epsilon_d}KL(f_\theta(x)||f_\theta(x+\mu))$ is continuous.
\end{lemma}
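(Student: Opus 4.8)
The plan is to exploit the elementary fact that a finite sum of continuous functions is continuous, so that it suffices to establish the joint continuity in $(x,\theta)$ of each of the two summands of $L_{tra}$ separately. The first summand $L_{ce}(f_\theta(x), y)$ is continuous in $(x,\theta)$ by exactly the observation already used in the $L_{adv}$ lemma, namely that the softmax-composed cross-entropy is a composition of continuous maps. Hence the entire burden falls on the KL term $\frac{1}{\lambda}\max_{||\mu||_\infty \le \epsilon_d} KL(f_\theta(x) \,||\, f_\theta(x+\mu))$.

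For that term, first I would observe that the inner expression $KL(f_\theta(x)\,||\,f_\theta(x+\mu))$ is jointly continuous in $(x,\theta,\mu)$: it is assembled from $f_\theta(\cdot)$, whose coordinates are softmax outputs lying in $(0,1)$ and thus bounded away from $0$, composed with $\log$ and finite products, all continuous on this range. I would then invoke the maximum-theorem counterpart of Lemma \ref{inf-cont}. Concretely, the perturbation domain $B \coloneqq \{\mu : ||\mu||_\infty \le \epsilon_d\}$ is compact, so applying Lemma \ref{inf-cont} to the continuous function $-KL(f_\theta(x)\,||\,f_\theta(x+\mu))$, with $B$ in the role of the compact space $X$ and $(x,\theta)$ in the role of $y$, shows that $\inf_{\mu\in B}\bigl(-KL\bigr) = -\max_{\mu\in B} KL$ is continuous; negating yields continuity of $\max_{\mu\in B} KL(f_\theta(x)\,||\,f_\theta(x+\mu))$ in $(x,\theta)$.

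Combining the two pieces, $L_{tra}$ is a sum of continuous functions and is therefore continuous, completing the argument. Alternatively, one could mirror the explicit $\epsilon$-$\delta$ scheme of the $L_{adv}$ proof verbatim, bounding $|\max_\mu KL(\cdot\,;\theta_1) - \max_\mu KL(\cdot\,;\theta_2)|$ by evaluating at the respective maximizers and using uniform continuity on the compact product of the parameter box $\Theta$ and $B$.

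The step I expect to be the main obstacle is confirming the joint continuity of the inner KL expression, specifically ensuring that the logarithms are well defined. This is the one place where a property of $f_\theta$ beyond what the $L_{adv}$ lemma needed is genuinely used: if any output coordinate could vanish, the log terms in the divergence would become discontinuous or infinite, but the softmax normalization keeps every coordinate in $(0,1)$, so the difficulty does not actually arise. Once this is granted, the compactness of $B$ does all the remaining work through Lemma \ref{inf-cont}.
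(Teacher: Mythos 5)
Your proof is correct and follows essentially the same route as the paper, which merely asserts the result ``similarly'' to the $L_{adv}$ case: the substantive content in both is that the maximum of a jointly continuous function over the compact ball $\{\mu : ||\mu||_\infty\le \epsilon_d\}$ varies continuously in $(x,\theta)$. Your write-up is in fact more complete than the paper's, since you make the sum decomposition explicit, reuse Lemma~\ref{inf-cont} instead of repeating the $\epsilon$-$\delta$ argument, and verify that the softmax outputs keep the logarithms in the KL term well defined.
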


\subsection{Discrete Loss for Lowest Accuracy}
As we stated in Remark \ref{remark-acc}, in order to exactly degrade the victim classifier's test accuracy, we consider a discrete loss function similar to \cite{ATgame}:
\[\L_{acc}(f_{\theta}(x), y) =
\begin{cases}
0  & \L_{cw}(f_{\theta}(x), y)\ge0 \\
-1 & \L_{cw}(f_{\theta}(x), y)<0\\
\end{cases}\]
where $\L_{cw}(f_{\theta}(x), y)=\max_{l\neq y}[f_{\theta}(x)]_l- [f_{\theta}(x)]_y$ is the Carlini-Wagner loss \cite{carlini2017towards}. Then the negative value of population risk: $-\E_{(x,y)\sim \Dd} \big[\L_{acc}(x, y; \theta)\big]$
is exactly the classification accuracy.

Though here $\L_{acc}$ is not continuous, we also can extend our existence Theorem \ref{thm:exist} to this case:
\begin{cor}
    Let $L_a = L_{acc}$, $L_c = L_{ce}$, we denote this Stackelberg game as $\mathcal{G}_{acc}$. Then the unlearnable example game $\mathcal{G}_{acc}$ has a Stackelberg equilibrium $(\A_{acc}^*, \theta_{acc}^*)$ .
    \begin{proof}
        Since $L_{cw}(f_{\theta}(x), y)$ is continuous at $\theta$, it is easy to verify that $-L_{acc}(f_{\theta}(x), y)$ is lower semi-continuous on $\theta$. 
        Furthermore $J(\theta)$ is lower semi-continuous on $\theta$. And the remaining proof is similar to before.
    \end{proof}
\end{cor}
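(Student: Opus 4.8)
The plan is to reuse the existence argument of Theorem~\ref{thm:exist} essentially verbatim, after observing that its proof never uses continuity of $\L_a$ itself. The only two properties actually invoked are: (i) continuity of $\L_c$, which makes the approximate best-response map $R_{\eta}(\cdot)$ lower semicontinuous via Lemma~\ref{set-cont}; and (ii) lower semicontinuity of the attacker's objective $\J(\theta)=-\E_{(x,y)\sim\Dd}[\L_a(x,y;\theta)]$, which is exactly the hypothesis Lemma~\ref{lm:set-map} needs to conclude that the marginal function $V(\A)=\sup_{\theta\in R_{\eta}(\A)}\J(\theta)$ is lower semicontinuous. Since $\L_c=\L_{ce}$ is continuous here, part (i) carries over unchanged, so the entire task reduces to verifying that $\J$ is lower semicontinuous in $\theta$, despite $\L_{acc}$ being discontinuous.

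First I would show that $-\L_{acc}(f_\theta(x),y)$ is lower semicontinuous in $\theta$ for each fixed $(x,y)$. The margin $\L_{cw}(f_\theta(x),y)=\max_{l\neq y}[f_\theta(x)]_l-[f_\theta(x)]_y$ is continuous in $\theta$, being the difference of a finite maximum of continuous coordinate functions and a continuous coordinate function. By definition $-\L_{acc}$ equals $1$ on the set $U=\{\theta:\L_{cw}(f_\theta(x),y)<0\}$ and $0$ on its complement. Because $\L_{cw}$ is continuous, $U$ is open, so I can check the superlevel sets directly: $\{\theta:-\L_{acc}>a\}$ equals $U$ (open) for $a\in[0,1)$, is empty for $a\ge 1$, and is all of $\Theta$ for $a<0$; each is open, which is precisely lower semicontinuity. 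Equivalently, $-\L_{acc}$ only ever jumps \emph{down} from $1$ to $0$ as $\theta$ crosses the boundary $\{\L_{cw}=0\}$ into the closed region, and downward jumps are compatible with lower (not upper) semicontinuity.

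Next I would lift this to $\J$. In the practical finite-sample formulation, $\J(\theta)$ is a finite average of the maps $-\L_{acc}(f_\theta(x_i),y_i)$, hence lower semicontinuous as a finite sum of lower semicontinuous functions; in the distributional formulation one instead invokes Fatou's lemma, which applies since the integrand is uniformly bounded (it lies in $\{0,1\}$). With $\J$ lower semicontinuous on the compact set $\Theta$, I would then run the Theorem~\ref{thm:exist} pipeline: Lemma~\ref{set-cont} gives lower semicontinuity of $R_{\eta}(\cdot)$, Lemma~\ref{lm:set-map} gives lower semicontinuity of $V$, and compactness of $\Gamma$ forces $V$ to attain its infimum at some $\A_{acc}^*$. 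Any $\theta_{acc}^*\in R_{\eta}(\A_{acc}^*)$ (nonempty because $\L_c$ is continuous and $\Theta$ compact, so the inner infimum is attained) then yields the Stackelberg equilibrium.

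The main obstacle is the single semicontinuity computation in the second step: one must get the \emph{direction} of the jump right. The conclusion hinges on $\L_{acc}$ being valued $0$ on the closed set $\{\L_{cw}\ge 0\}$ and $-1$ on the open set $\{\L_{cw}<0\}$, so that its negation is high on the open region; had the convention been reversed, $-\L_{acc}$ would be upper rather than lower semicontinuous and Lemma~\ref{lm:set-map} would no longer apply. Everything else is inherited from Theorem~\ref{thm:exist}, since replacing a continuous $\L_a$ by a merely lower semicontinuous one leaves every step of that argument intact.
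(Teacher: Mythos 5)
Your proposal is correct and follows essentially the same route as the paper's proof: the paper likewise reduces the claim to the lower semicontinuity of $-\L_{acc}$ (via continuity of $\L_{cw}$) and of $\J(\theta)$, and then reruns the argument of Theorem~\ref{thm:exist}, which only ever needed $\J$ to be lower semicontinuous rather than continuous. You simply make explicit the superlevel-set verification and the Fatou/finite-sum step that the paper dismisses as ``easy to verify,'' which is a faithful filling-in rather than a different approach.
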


\section{Conclusion}
While unlearnable example attacks have been well studied in various formulations, a unified theoretical framework remains unexplored. In this paper, we formulate the unlearnable example attack as a Stackelberg game that encompasses a range of scenarios. We propose a novel attack GUE by directly computing the Stackelberg equilibrium using a first-order optimization method and by using a better loss function. 
Our approach possesses a stronger theoretical intuition, whereas other methods are primarily based on empirical practices. 
Furthermore, we show that the game equilibrium gives the most powerful poison attack in the sense that the victim neural network has the lowest test accuracy among all networks within the same hypothesis space.

Extensive experiments demonstrate the effectiveness of GUE in different scenarios, in particular, the generalizability of poison generator trained with low percentage of training data and its effectiveness against adversarial training. 

\paragraph{Limitations} Our algorithm cannot converge well when the adversarial radius is larger, hence not effective to poison adversarial trained model with more aggressive budgets. Future research is needed to find more effective algorithms to solve the unlearnable example games.

\section{Acknowledgements}
This work is partially supported by the NSFC grant No.11688101 and NKRDP grant No.2018YFA0306702.
The authors thank anonymous referees for their valuable comments.
%

\end{document}